\newtheorem{theorem}{Theorem}
\newtheorem{definition}{Definition}
\newtheorem{proof}{Proof}
\newcommand{\etal}{\textit{et al.}}
\newcommand{\ie}{\textit{i.e.}}
\newcommand{\eg}{\textit{e.g.}}
\DeclareMathOperator*{\argmax}{arg\,max}
\DeclareMathOperator*{\argmin}{arg\,min}
\begin{document}

\title{GAN-based Projector for Faster Recovery with Convergence Guarantees in Linear Inverse Problems}

\author{Ankit Raj \thanks{Equal contribution. Ankit Raj and Yoram Bresler's research work was supported in part by the National Science Foundation under Grant IIS 14-47879 . Yuqi Li and Yoram Bresler's reseach work was supported in part by Sandia National Laboratories under Grant ID: AE056, IP: 00371547} \quad Yuqi Li\footnotemark[1] \quad Yoram Bresler
\\
University of Illinois at Urbana-Champaign, USA\\
{\tt\small \{ankitr3, yuqil3, ybresler\}@illinois.edu}
}

\maketitle
\begin{abstract}
 \textit{ A Generative Adversarial Network (GAN) with generator $G$ trained to model the prior of images has been shown to perform better than sparsity-based regularizers in ill-posed inverse problems. Here, we propose a new method of deploying a GAN-based prior to solve linear inverse problems using projected gradient descent (PGD). Our method learns a network-based projector for use in the PGD algorithm, eliminating expensive computation of the Jacobian of $G$. Experiments show that our approach provides  a speed-up of $60\text{-}80\times$ over  earlier GAN-based recovery methods along with better accuracy. Our main theoretical result is that if the measurement matrix is moderately conditioned on the manifold range($G$) and the projector is $\delta$-approximate, then the algorithm is  guaranteed to reach  $O(\delta)$ reconstruction error in $O(log(1/\delta))$ steps in the low noise regime. Additionally, we propose a fast method to design such measurement matrices for a given $G$. Extensive experiments demonstrate the efficacy of this method by requiring $5\text{-}10\times$ fewer measurements than random Gaussian measurement matrices for comparable recovery performance.
Because the learning of the GAN and projector is decoupled from the measurement operator, our GAN-based projector and recovery algorithm are applicable without retraining to all linear inverse problems, as confirmed by experiments on compressed sensing, super-resolution, and inpainting.
}
\end{abstract}

\vspace*{-0.5em}
\section{Introduction}
\vspace*{-0.5em}

Many application such as computational imaging, and remote sensing fall in the compressive sensing (CS) paradigm. CS \cite{donoho2006compressed,candes2006stable} refers to projecting a high dimensional, sparse or sparsifiable signal $x\in\mathbb{R}^n$ to a lower dimensional measurement $y\in\mathbb{R}^m, m\ll n$, using a small set of linear, non-adaptive frames. The noisy measurement model is:
\begin{equation}
    y = Ax+v,A\in\mathbb{R}^{m\times n}, v\sim \mathcal{N}(0,\sigma^2 I)
\end{equation}
where the measurement matrix $A$ is often a random matrix. In this work, we are interested in the problem of recovering the unknown natural signal $x$, from the compressed measurement $y$, given the measurement matrix $A$. Traditionally, for signal priors, natural images are considered sparse in some fixed or learnable basis \cite{elad2006image, dong2011image, wen2015structured,liu2017image,dabov2009bm3d,yang2010image, elad2010sparse, li2009user}. \\
Instead of the sparse prior commonly adopted by CS literature, we turn to a learned prior. 
Neural network-based inverse problem solvers have been explored recently \cite{gregor2010learning,venkatakrishnan2013plug,rick2017one,adler2017solving,fan2017inner,gupta2018cnn,mardani2018neural,romano2017little,liu2017image, wen2019transform, metzler2017learned}. However, \cite{adler2017solving,fan2017inner,gupta2018cnn,mardani2018neural} use information about the measurement matrix $A$ while training the network. Thus, their algorithms are limited to a particular set-up to solve specific inverse-problem and usually cannot solve other problems without retraining. Another line of work, \cite{mousavi2017deepcodec, mousavi2018data} jointly optimizes the measurement matrix and recovery algorithm, again resulting in algorithm limited to a particular inverse problem and measurement matrix. Instead, in this paper the network is trained independently of $A$ and can be generalized across different inverse problems. This aspect is shared by two other neural-network-based solvers \cite{venkatakrishnan2013plug,rick2017one}, 
however, they  model the image prior only implicitly by training a denoiser or a proximal map, and perhaps for this reason appear to require massive quantity of training samples. Importantly, very little is known about why and when they perform well, as even if the learned proximal map is assumed to be exact, there is no theoretical  convergence guarantee or bound on the recovery error.\\

In this work, we leverage the success of generative adversarial network (GAN) \cite{goodfellow2014generative,creswell2018generative,zhu2016generative,yeh2017semantic,berthelot2017began,ledig2017photo} in modeling the distribution of data. Indeed, GAN-based priors for natural images have been successfully employed to solve linear inverse  problems \cite{mardani2019deep,bora2017compressed,shah2018solving}. However, in \cite{mardani2019deep}, the operator $A$ is integrated into training the GAN, limiting it to a particular inverse problem. We therefore focus on the recent papers \cite{bora2017compressed,shah2018solving}  closest to our work, for extensive comparisons.
\\

Bora \etal \cite{bora2017compressed} do not have a guarantee on the convergence of their algorithm for solving the non-convex optimization problem, requiring several  random initializations. Similarly, in \cite{shah2018solving}, the inner loop uses a gradient descent algorithm to solve a non-convex optimization problem  with no guarantee of convergence to a global optimum. Furthermore, the conditions imposed in \cite{shah2018solving} on the random Gaussian measurement matrix for convergence of their outer iterative loop are unnecessarily stringent and cannot be achieved with a moderate number of measurements. Importantly, both these methods require expensive computation of the Jacobian $\nabla_z G$ of the differentiable generator $G$ with respect to the latent input $z$. Since computing  $\nabla_z G$ involves back-propagation through $G$ at every iteration, these reconstruction algorithms are computationally expensive and  even when implemented on a GPU they are slow.\\[1ex]
We propose a GAN-based projection network to solve compressed sensing recovery problems using projected gradient descent (PGD). We are able to reconstruct the image even with $61 \times$ compression ratio (\ie, with less than $1.6 \%$ of a full measurement set) using a random Gaussian measurement matrix. The proposed approach provides superior recovery accuracy over existing methods, simultaneously with a $60\text{-}80\times$ speed-up, making the algorithm useful for practical applications. We also provide theoretical results on the convergence of the reconstruction error, given that the measurement matrix $A$ satisfies certain conditions when restricted to the range $R(G)$ of the generator. We complement the theory by proposing a method to design a measurement matrix that satisfies these sufficient conditions for guaranteed convergence. We assess these sufficient conditions for both the random Gaussian measurement matrix and the designed matrix for a given dataset. Both our analysis and experiments show that with the designed matrix, $5\text{-}10\times$ fewer measurements suffice for robust recovery. Because the training of the GAN and projector is decoupled from the measurement operator, we demonstrate that other linear inverse problems like super-resolution and inpainting can also be solved using our algorithm without retraining.
\begin{figure}[tb]
    \centering
    \includegraphics[width=0.5\textwidth]{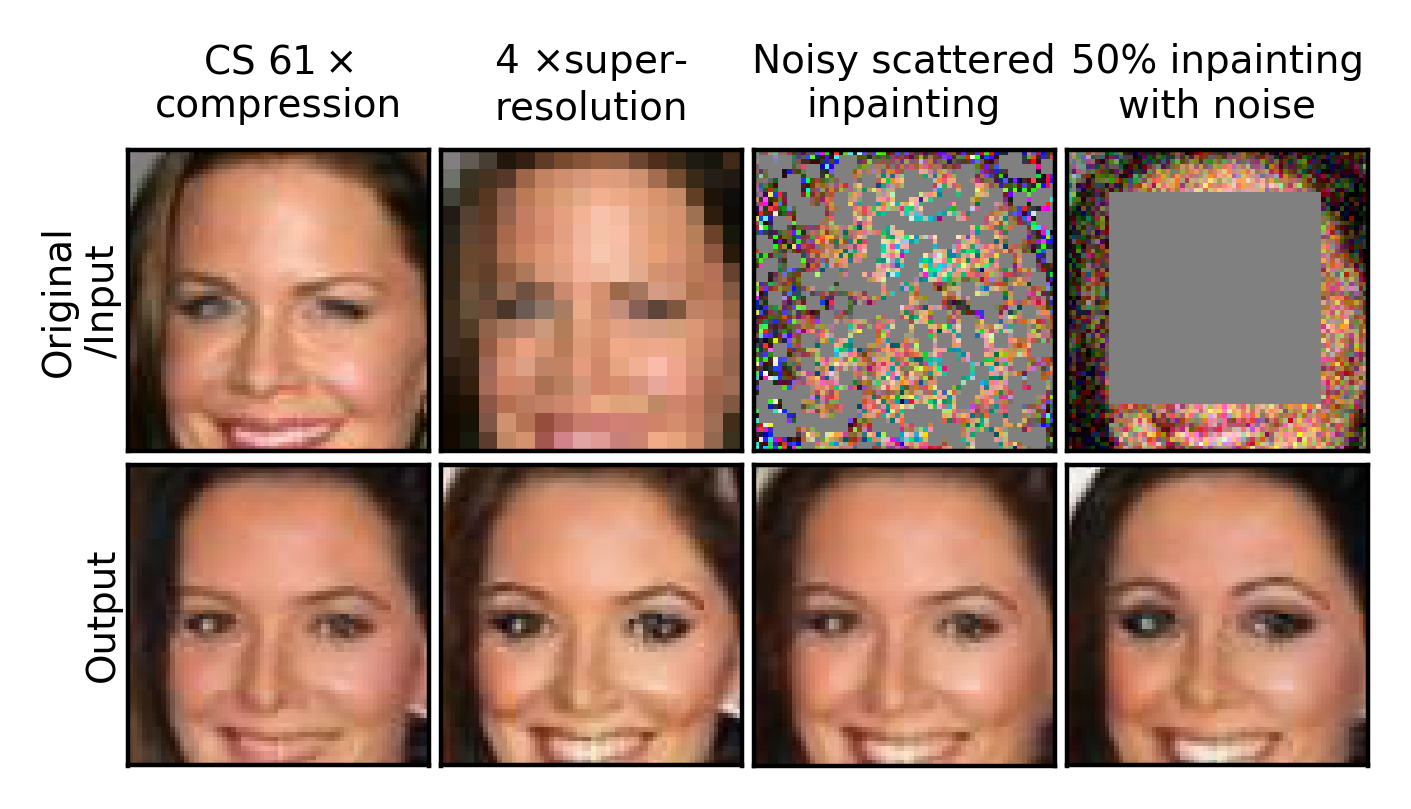}
    \caption{Our network-based PGD solves the following inverse problems: compressive sensing with 61$\times$ compression, $4\times$ super-resolution, scatterd inpaining with high noise ($\sigma=40$) and 50\% blocked inpainting with high noise.}
    \label{fig:first_fig}
    \vspace*{-1em}
\end{figure}

\section{Problem Formulation}
Let $x^*\in \mathbb{R}^{n} $ denote a ground truth image, $A$ a fixed measurement matrix, and $y = Ax^*+v \in \mathbb{R}^m$ the noisy measurement, with noise $v\sim\mathcal{N}(0,\sigma^2 I)$. We assume that the ground truth images lie in a non-convex set $S = R(G)$, the range of generator $G$. The maximum likelihood estimator (MLE) of $x^*$,  $\hat{x}_{MLE}$,  can be formulated as follows:
\begin{equation*}\label{eq:objective}
    \begin{aligned}
    \hat{x}_{MLE} &= \argmin_{x\in R(G)} -\log p(y|x)
    &=\argmin_{x\in R(G)} \|y-Ax\|_2^2 
    \end{aligned}
\end{equation*}
Bora \etal \cite{bora2017compressed} (whose algorithm we denote by CSGM) solve the optimization problem $\hat{z} = \argmin_{z\in \mathbb{R}^k} \|y-AG(z)\|^2 + \lambda \|z\|^2$
in the latent space ($z$), and set $\hat{x} = G(\hat{z})$. 
Their  gradient descent algorithm often gets stuck at local optima. Since the problem is non-convex, the reconstruction  is strongly dependent on the initialization of $z$ and requires several random initializations to converge to a good point. To resolve this problem, Shah and Hegde \cite{shah2018solving}  proposed a projected gradient descent (PGD)-based method (which we call PGD-GAN) to solve  \eqref{eq:objective}, shown in fig.\ref{fig:compare_pgds}(a). They  perform gradient descent in the ambient ($x$)-space and project the updated term onto $R(G)$. This projection involves solving another non-convex minimization problem (shown in the second box in 
fig.\ref{fig:compare_pgds}(a)) using the Adam optimizer \cite{kingma2014adam} for 100 iterations from a random initialization.
No convergence result is given for this iterative algorithm to perform the non-linear projection, and the  convergence analysis for the PGD-GAN algorithm \cite{shah2018solving} only holds if one assumes that the inner loop succeeds in finding the optimum projection. 

Our main idea in this paper is to replace this iterative scheme in the inner-loop with a learning-based approach, as it often performs better and does not fall into local optima \cite{zhu2016generative}. Another important benefit is that both earlier approaches require expensive computation of the Jacobian of $G$, which is eliminated in the proposed approach.
\section{Proposed Method}
In this section, we introduce our methodology and architecture to train a projector using a pre-trained generator $G$ and how we use this projector to obtain the optimizer in (\ref{eq:objective}).
\begin{figure}[tb]
    \centering
    \begin{tabular}{c}
         \includegraphics[width=0.5\textwidth]{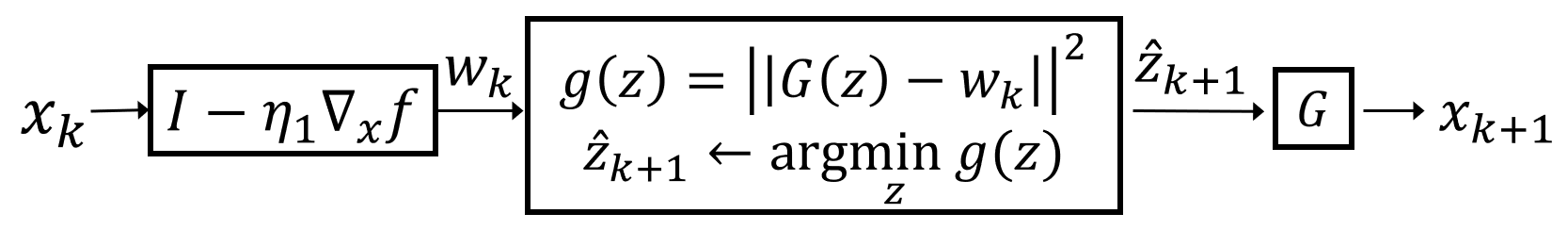}\\
          \small(a) PGD with inner-loop\\[2ex]
         \includegraphics[width=0.4\textwidth]{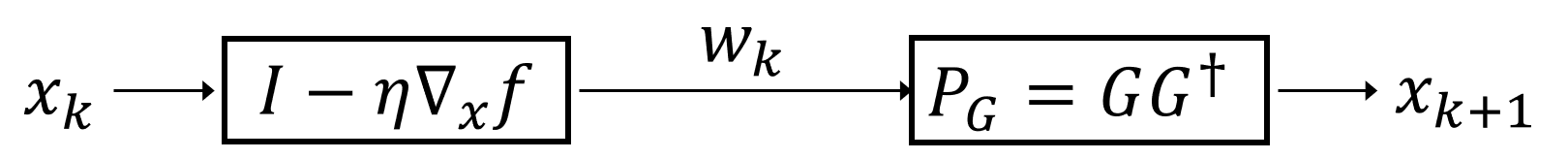}\\
          \small(b) Network-based PGD (NPGD)\\
    \end{tabular}
    \caption{(a) Block diagram of PGD using inner-loop \cite{shah2018solving}. $k$ represents the outer loop iterators and $\hat{z}_{k+1}$ is the optimizer of $\|G(z)-w_k\|^2$ obtained by solving the inner-loop using Adam optimizer. (b) Block diagram of our network-based PGD (NPGD) with $P_G = GG^{\dagger}$ as a network based projector onto $R(G)$. $f(x) = \|Ax-y\|^2$ is the cost function defined in {\protect \eqref{eq:objective}}}
    \label{fig:compare_pgds}
\end{figure}

\subsection{Inner-Loop-Free Scheme}
We show that by carefully designing a network architecture with a suitable training strategy, we can train a projector onto $R(G)$, the range of the generator $G$, thereby removing the inner-loop required in the earlier approach. The resulting iterative updates of our network-based PGD (NPGD) algorithm are shown in fig.\ref{fig:compare_pgds}(b). This approach eliminates the need to solve the non-convex optimization problem in the inner-loop, which depends on initialization and requires several restarts. Furthermore, our method provides a significant speed-up by a factor of $30\text{-}40\times$ on the CelebA dataset for two major reasons: (i) since there is no inner-loop, the total number of iterations required for convergence is significantly reduced, (ii) doesn't require computation of $\nabla G_{z}$ \ie the Jacobian of the generator with respect to the input, $z$. This expensive operation repeats back-propagation through the network for $T_{out} \times \#_{restarts}$(for \cite{bora2017compressed}) or $T_{out}\times T_{in}$ (for \cite{shah2018solving}) times, where $\#_{restarts}, T_{out} \text{ and } T_{in}$ are number of restarts, outer and inner iterations respectively. 

\subsection{Generator-based Projector}
\begin{figure}[tb]
    \centering
    \includegraphics[width=0.5\textwidth]{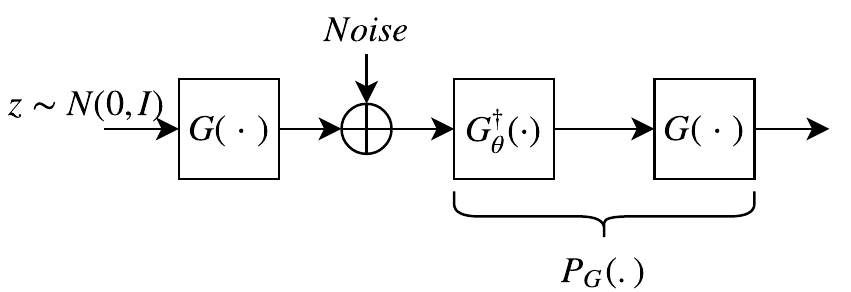}
    \caption{Architecture to train a projector onto range($G$)}
    \label{fig:proj_gan}
\end{figure}
A GAN consists of two networks, generator and discriminator, which follow an adversarial training strategy to learn the data distribution. A well-trained generator $G: \mathbb{R}^k\rightarrow R(G)\subset \mathbb{R}^n, k\ll n$ takes in a random latent variable $z\sim\mathcal{N}(0,I_k)$ and produces sharp looking images imitating the training data distribution in $\mathbb{R}^n$. The goal is to train a network that projects an image $x \in \mathbb{R}^n$ onto $R(G)$. The projector, $P_S$ onto a set $S$ should satisfy two main properties: $(i)$ \textit{Idempotence}, for any point $x$, $P_S(P_S(x)) = P_S(x)$, $(ii)$ \textit{Least distance}, for a point $\tilde{x}$, $P_S(\tilde{x}) = {\arg\min}_{x \in S} \|x - \tilde{x}\|^2$. Figure~\ref{fig:proj_gan} shows the network structure we used to train a projector using a GAN. We define the multi-task loss to be:
\begin{equation}\label{eq:invGobjective}
\footnotesize
\begin{aligned}
    \mathcal{L} ({\theta}) &= 
    \mathbb{E}_{z,\nu}\left[\left\lVert G\left(G^{\dagger}_{\theta}\left(G(z) + \nu \right)\right) - G(z)\right\rVert^{2}\right]\\ & \quad +\mathbb{E}_{z,\nu}\left[\lambda\left\|G^{\dagger}_{\theta}\left(G(z)+\nu\right) - z\right\|^2\right]
\end{aligned}
\end{equation}
where $G$ is a generator obtained from the GAN trained on a particular dataset. Operator $G^{\dagger}_{\theta}: \mathbb{R}^n\rightarrow \mathbb{R}^k$, parameterized by $\theta$, approximates a non-linear least squares pseudo-inverse of $G$ and $\nu  \sim \mathcal{N}(0, I_n)$ indicates noise added to the generator's output for different $z \sim \mathcal{N}(0, I_k)$ so that the projector network denoted by $P_G =GG^{\dagger}_{\theta}$ is trained on  points outside the range($G$) and learns to project them onto $R(G)$. The objective function consists of two parts. The first  is similar to standard \textit{Encoder-Decoder} framework, however, the loss function is minimized over $\theta$ -- the parameters of $G^{\dagger}$, while keeping the parameters of $G$ (obtained by standard GAN training) fixed. This ensures that  $R(G)$ doesn't change and $P_G = GG^{\dagger}$ is a mapping onto $R(G)$. The second part is used to keep $G^{\dagger}(G(z))$ close to true $z$ used to generate training image $G(z)$. This second term can be considered a regularizer for training the projector with $\lambda$ being the regularization constant.
\section{Theoretical Study}
\subsection{Convergence Analysis}
Let $f(x)=\|Ax-y\|_2^2$ denote the loss function of projected gradient descent. Algorithm \eqref{alg:1} describes the proposed network-based projected gradient descent (NPGD) to solve equation \eqref{eq:objective}.
\begin{algorithm}[t]
\caption{Network-based Projected Gradient Descent}
\label{alg:1}
\textbf{Input}: loss function $f$, $A, y, G, G^\dagger$\\
\textbf{Parameter}: step size $\eta( = \frac{1}{\beta})$\\
\textbf{Output}: an estimate $\hat{x}\in R(G)$
\begin{algorithmic}[1] 
\STATE Let $t=0, x_0=A^Ty$.
\WHILE{$t<T$}
\STATE $w_t:= x_t -\eta A^T(Ax_t-y) $
\STATE $x_{t+1} := G(G^\dagger(w_t))$
\ENDWHILE
\STATE \textbf{return} $ \hat{x}=x_T $
\end{algorithmic}
\end{algorithm}

\begin{definition}[Restricted Eigenvalue Constraint (REC)] \label{def:rec}
Let $S\subset \mathbb{R}^n$. For some parameters $0<\alpha<\beta$, matrix $A\in\mathbb{R}^{m\times n} $ is said to satisfy the $REC(S,\alpha,\beta)$ if the following holds for all $x_1,x_2\in S$.   
\begin{equation}
    \alpha \|x_1-x_2\|^2 \leq \|A(x_1-x_2)\|^2 \leq \beta \|x_1-x_2\|^2. \label{eqn:rec}
\end{equation}
\end{definition}

\begin{definition}[Approximate Projection using GAN]
A concatenated network $G(G^\dagger(\cdot)): \mathbb{R}^n\rightarrow R(G)$ is a $\delta$-approximate projector, if the following holds for all $ x\in \mathbb{R}^n$:
\begin{equation}
    \|x-G(G^\dagger(x))\|^2 \leq \min_{z\in \mathbb{R}^k}\|x-G(z)\|^2+\delta
    \label{eqn:delta}
\end{equation}
\end{definition}
Theorem \ref{thm: conv}  provides upper bounds on the cost function and reconstruction error of our NPGD algorithm 
after $n$ iterations.
\begin{theorem} \label{thm: conv}
Let matrix $A\in\mathbb{R}^{m\times n} $  satisfy the $REC(S,\alpha,\beta)$ with $\beta/\alpha < 2$, and let the concatenated network $G(G^\dagger(\cdot))$ be a $\delta$-approximate projector. Then for every $x^*\in R(G)$ and  measurement $y=Ax^*$, executing algorithm \ref{alg:1} with step size $\eta=1/\beta$, will yield $f(x_{n})\leq (\frac{\beta}{\alpha}-1)^n f(x_0) + \frac{\beta\delta}{2 -\beta/\alpha}$. Furthermore, the algorithm achieves $\|x_n-x^*\|^2 \leq \big(C+\frac{1}{2\alpha/\beta-1}\big)\delta$ after $\frac{1}{2-\beta/\alpha}\log\big(\frac{f(x_0)}{C\alpha\delta}\big)$ steps. When $n\rightarrow\infty$, $\|x^*-x_\infty\|^2\leq \frac{\delta}{2\alpha/\beta-1}$. 
\end{theorem}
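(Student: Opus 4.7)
The plan is to run a standard projected-gradient-descent analysis adapted to the $\delta$-approximate projector, using the Restricted Eigenvalue Constraint to both extract contraction and convert cost-function bounds into error bounds. First I would set $e_t := x_t - x^*$ and note that, since $x^*\in R(G)$, the $\delta$-approximate projection guarantee in \eqref{eqn:delta} gives $\|w_t - x_{t+1}\|^2 \leq \|w_t - x^*\|^2 + \delta$. Expanding the squared norms and using $w_t - x^* = (I - \eta A^T A)e_t$ (valid because $y = Ax^*$ in the noiseless statement) leads, after cancellation, to
\begin{equation*}
\|e_{t+1}\|^2 \leq 2\langle e_t, e_{t+1}\rangle - 2\eta\langle Ae_t, Ae_{t+1}\rangle + \delta.
\end{equation*}

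Next I would apply the polarization identity to both inner products and exploit the fact that $x_t, x_{t+1}\in R(G)$ so that the REC applies to their difference, yielding $\eta\|A(x_t - x_{t+1})\|^2 \leq \eta\beta\|x_t - x_{t+1}\|^2 = \|x_t - x_{t+1}\|^2$ with $\eta = 1/\beta$. This is the crucial cancellation: the $\|e_t - e_{t+1}\|^2$ terms coming from the two polarizations drop out, leaving
\begin{equation*}
\eta\|Ae_{t+1}\|^2 \leq \|e_t\|^2 - \eta\|Ae_t\|^2 + \delta.
\end{equation*}
Finally bounding $\|e_t\|^2 \leq \|Ae_t\|^2/\alpha$ by the REC lower inequality and rearranging gives the linear recursion $f(x_{t+1}) \leq \rho\, f(x_t) + \beta\delta$ with contraction factor $\rho := \beta/\alpha - 1$, which is strictly less than $1$ by the hypothesis $\beta/\alpha < 2$.

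Iterating the recursion and summing the geometric series in $\rho$ produces the advertised bound $f(x_n) \leq \rho^n f(x_0) + \beta\delta/(2 - \beta/\alpha)$. For the reconstruction statement I would again invoke the REC to write $\|x_n - x^*\|^2 \leq f(x_n)/\alpha$, which after substitution becomes $\rho^n f(x_0)/\alpha + \delta/(2\alpha/\beta - 1)$; requiring the transient term to be at most $C\delta$ is equivalent to $\rho^n f(x_0) \leq C\alpha\delta$, i.e.\ $n \geq \log(f(x_0)/(C\alpha\delta))/\log(1/\rho)$, and the inequality $\log(1/\rho) \geq 1 - \rho = 2 - \beta/\alpha$ (standard for $0<\rho<1$) yields the stated iteration count. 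The $n\to\infty$ limit is then immediate.

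The main delicate step is arranging the polarization so that the $\|e_t - e_{t+1}\|^2$ and $\eta\|A(e_t-e_{t+1})\|^2$ terms cancel against each other; this is where the identification of $R(G)$ as the set on which REC is assumed, together with the choice $\eta = 1/\beta$, is essential. Everything else is routine once that contraction inequality is in hand.
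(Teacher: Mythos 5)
Your proposal is correct and follows essentially the same route as the paper's proof: both start from the $\delta$-approximate projection inequality with $x^*\in R(G)$, substitute $w_t=x_t-\eta A^TA(x_t-x^*)$, use the REC upper bound on $x_{t+1}-x_t$ together with $\eta=1/\beta$ to kill the $\|A(x_{t+1}-x_t)\|^2$ term and the REC lower bound on $x_t-x^*$ to reach the identical recursion $f(x_{t+1})\leq(\beta/\alpha-1)f(x_t)+\beta\delta$, differing only in bookkeeping (your polarization identities versus the paper's expansion of $2\langle x_t-x_{t+1},A^TA(x^*-x_t)\rangle$ into differences of $f$ values). Your explicit use of $\log(1/\rho)\geq 1-\rho$ to justify the stated iteration count is a small point of added rigor over the paper, which leaves that step implicit.
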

\begin{proof}
Please refer to the appendix.
\end{proof}

From theorem \ref{thm: conv}, one important factor is the ratio $\beta/\alpha$. This ratio largely determines the speed of linear ("geometric") convergence, as well as the reconstruction error $\|x^*-x_\infty\|^2$ at convergence. We would like $\beta/\alpha$ ratio as close to 1 as possible and must have $\beta/\alpha<2$ for convergence. 
It has been shown in \cite{baraniuk2009random} that a random matrix $A$ with orthonormal rows will satisfy this condition with high probability for $m$ roughly linear in dimension $k$ with log factors dependent on the properties of the manifold, in this case, $R(G)$. However, as we demonstrate later (see figure \ref{fig:MNIST-REC}), a random matrix often will not satisfy the desired condition $\beta/\alpha <2$ for small or moderate $m$. To extend into such regimes, we propose next a fast heuristic method to find a relatively good measurement matrix for an image set $S$, given a fixed $m$.

\subsection{Generator-based Measurement Matrix Design} \label{sec:prop_method-Amatrix}
There have been a few attempts to optimize the measurement matrix based on the specific data distribution. Hegde \etal \cite{hegde2015numax} find a deterministic measurement matrix that  satisfies $REC(S,1-\delta_S,1+\delta_S)$ for a given finite set $S$ of size $|S|$ , but their time complexity is $O(n^3+|S|^2n^2)$. Because the secant set $S$ (defined later) would be of cardinality $|S| =O(M^2)$ for a training set of size $M$, with $M \gg n$, the time complexity would be infeasible even for fairly small $n$-pixel images. Furthermore, the final number of required measurements $m$, which is determined by the algorithm,   depends on the isometry constant $\delta_S$, and cannot be specified in advance. Kvinge \etal \cite{kvinge2018gpu}  introduced a heuristic iterative algorithm to find a measurement matrix with orthonormal rows that satisfies the REC  with small $\beta/\alpha$ ratio, but their time complexity is $O\left(n^5\right)$ and the space complexity is $O(n^3)$, which is infeasible for a high-dimensional image dataset. Instead, our method, based on sampling from the secant set, has time complexity $O(Mn^2+n^3)$, and space complexity $O(n^2)$, where $M$ is a tiny fraction of $|S|$. 

\begin{definition}[Secant Set]
The normalized secant set of $G$ is defined as follows:
\begin{equation}
    \mathcal{S}(G) = \Big\{\frac{x_1-x_2}{\|x_1-x_2\|}: x_1,x_2\in R(G) \Big\}
\end{equation}
and the associated distribution is denoted as $\Pi_S$, where
\begin{equation}
    z_1,z_2\sim\mathcal{N}(0,I_k), s = \frac{G(z_1)-G(z_2)}{\|G(z_1)-G(z_2)\|}\sim \Pi_S
    \label{eq:def_s}
\end{equation}
\end{definition}
Given $\mathcal{S}(G)$, the optimization over A is as follows:
\begin{align}
    &\min _ { A\in\mathbb{R}^{m\times n} }\frac{\beta}{\alpha} = \min _ { A\in\mathbb{R}^{m\times n} } \frac{\max _ {s\in \mathcal{S} ( G )}   \left\| A s\right\| ^ { 2 } }{\min _ {s\in \mathcal{S} ( G )}   \left\| A s\right\| ^ { 2 }} \label{eq:optimizeA_direct}\\ 
    &\leq \min _ { AA^T = I_m } \frac{1 }{\min _ {s\in \mathcal{S} ( G )}   \left\| A s\right\| ^ { 2 }} = \Big( \max_{AA^T = I_m}\min _ {s\in \mathcal{S} ( G )}   \left\| A s\right\| ^ { 2 } \Big)^{-1} \nonumber
\end{align}
The inequality is due to an additional constraint on $A: AA^T=I_m$. This results in the largest singular value of $A$ being 1 and hence the numerator term, $\max _ {s\in \mathcal{S} ( G )}   \left\| A s\right\| ^ { 2 }$, is at most 1. 
As the minimization in \eqref{eq:optimizeA_direct} 
requires iterating through the set $S$, we use the expected value over  $s \sim \Pi_S$ as a surrogate objective
\begin{equation}
    A = \argmax_{AA^T = I_m} E_{s\sim \Pi_S} \left[ \left\| A s\right\| ^ { 2 }\right] \approx \argmax_{AA^T = I_m}  \frac{1}{M} \sum_{j=1}^M \|As_j\|^2
    \label{eq:optimizeA}
\end{equation}
The last approximation replaces the surrogate objective by its empirical estimate obtained by sampling $M\gg n$ secants $(s_j)_{j=1}^M$ according to $\Pi_S$. 
For $m$ and $M$ large enough, this designed measurement matrix would satisfy the condition $\beta/\alpha<2$ for most of the secants in $R(G)$. Constructing an $n\times M$ matrix $D = [s_1|s_2|\dots|s_M]$,  \eqref{eq:optimizeA} reduces to:
\begin{equation} \label{eq:Astar}
    A^* = \argmax_{A} \|AD\|_F^2  \text{ s.t. } AA^T=I_m
\end{equation}
The optimal $A^*$ in \eqref{eq:Astar} has rows equal to the $m$ leading eigenvectors
$DD^T$. 
We compute $DD^T = \sum_{j=1}^M s_j s_j^T$ and its eigenvalue decomposition at time complexity $O(Mn^2+n^3)$ and space complexity $O(n^2)$.

Our approach to the design of $A$ is related to one of the steps described by \cite{kvinge2018gpu}, however by using the sampling-based estimates per \eqref{eq:def_s} and \eqref{eq:optimizeA} rather than the secant set for the entire training set, we reduce the computational cost by orders of magnitude to a modest level. 
\subsubsection{REC Histogram for $A$}
We analyze the $REC$ conditions by plotting the histogram of $\|As\|$ values for different measurement matrices $A \in R^{m\times n}$ in figure \ref{fig:MNIST-REC} where $s \in S$,  the secant set of the samples from $G$ trained on MNIST dataset. The left column shows the histograms for the random and $G$-based designed matrix. For random $A$, the spread of $\|As\|$ is clearly wider for few measurements $m$, resulting in $\beta/\alpha \not < 2$. For the designed $A$, the histogram is more concentrated. Even with as few as  $m=20$ measurements (for MNIST), the designed $A$ satisfies the sufficient condition $\beta/\alpha < 2$  for convergence of the PGD algorithm, thus ensuring  stable recovery. The middle columns shows the histograms corresponding to the downsampling $A$ that takes the spatial averages of $f \times f$, $f=2,3,4,5$, pixel values to generate low-resolution images. The right column shows the histograms for the inpainting $A$ that masks out a centered square of various sizes. As expected, with more difficult recovery problems the spread increases. However, for each inverse problem (defined by a matrix $A$), the ratio $\beta/\alpha$ can be estimated for \eg, 99.9\% of the samples, providing, in combination with Theorem~\ref{thm: conv}, an explicit quantitative guarantee. 
\begin{figure}[tb]
    \centering
    \includegraphics[width=0.5\textwidth]{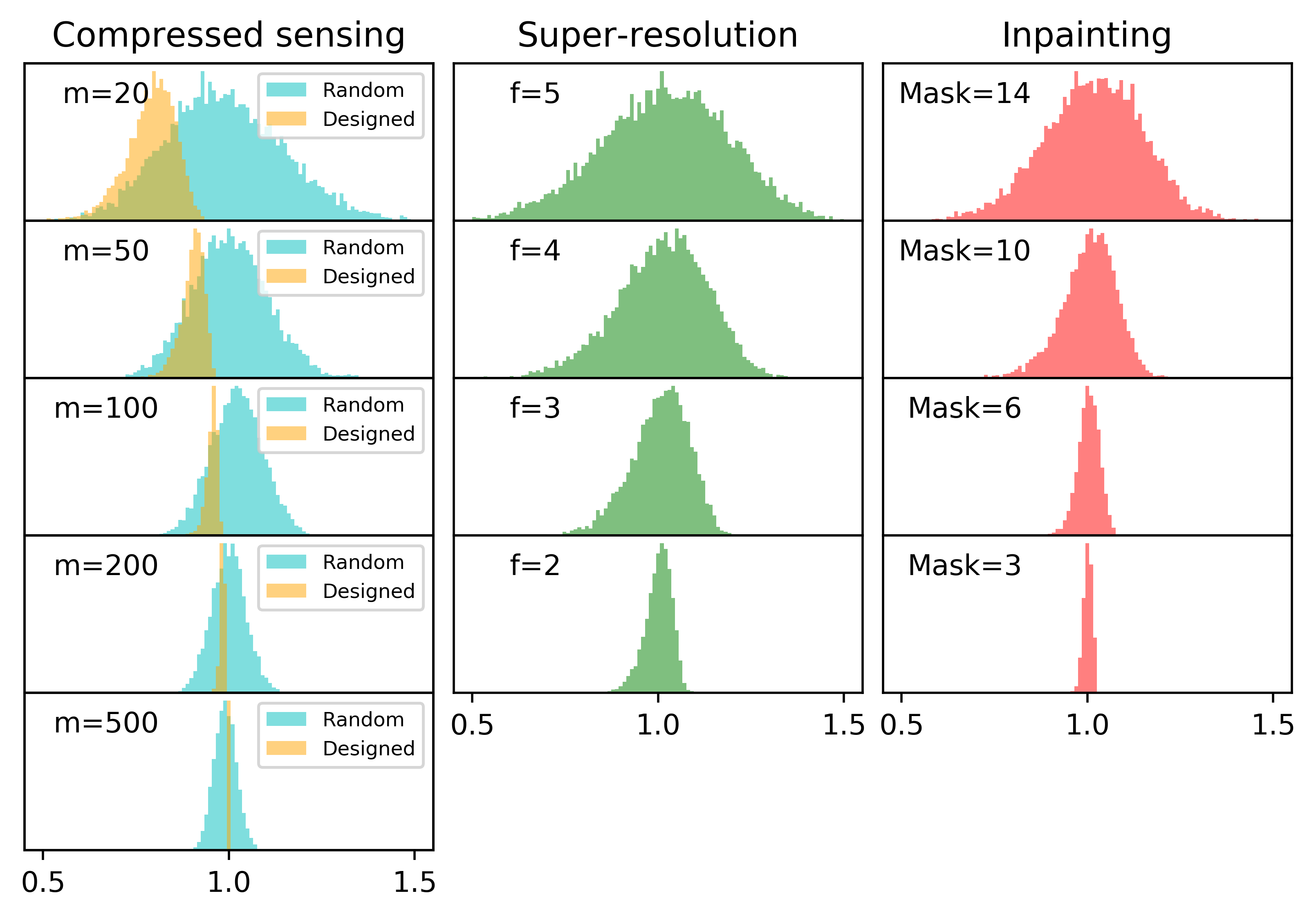}
    \caption{Distribution of $\|As\|$ with different $A$. Left: Random (cyan) and Designed matrix (orange) with different $m$. Middle: Downsampling matrix (green) with different $f$. Right: Inpainting matrix (red) with different mask size.}
    \label{fig:MNIST-REC}
\end{figure}

\section{Experiments}
\textbf{Network Architecture:} We implement two GAN architectures: $(i)$ Deep convolutional GAN (DCGAN) \cite{radford2015unsupervised} for MNIST and CelebA, $(ii)$ Self-attention GAN (SAGAN) \cite{zhang2018self} for LSUN church-outdoor dataset. DCGAN builds on multiple convolution, transpose convolution, and ReLU layers, and uses batch normalization and dropout for better generalization, whereas SAGAN combines convolutions with self-attention mechanisms in both, generator and discriminator, allowing for long-range dependency modeling to generate images with high-resolution details. For DCGAN, we have used standard objective function of the adversarial loss, whereas for SAGAN, we minimized the hinge version of the adversarial loss \cite{miyato2018spectral}. 
The architecture of the model $G^{\dagger}$ is similar to that of the discriminator $D$ in the GAN and only differs in the final layer, where we add a fully-connected layer with output size same as the latent variables dimension $k$. For training $G^{\dagger}$, we used the architecture shown in Fig. \ref{fig:proj_gan} and objective defined in \eqref{eq:invGobjective}, while keeping the pre-trained $G$ fixed. We found that using $\lambda = 0.1$, in \eqref{eq:invGobjective}, gave the best performance. The noise $\nu$ used for perturbing the training images $G(z)$ follows $\mathcal{N}(0,\sigma^2 I)$. We observed that training with low $\sigma$ results in a projector similar to an identity operator and hence only projecting close-by points onto $R(G)$, whereas for large $\sigma$ the projector violates idempotence. We empirically set $\sigma=1$. We then obtain a projection network $P_G = GG^{\dagger}$ that approximately projects images lying outside $R(G)$ onto $R(G)$. We empirically pick latent variable dimension $k=100$.\\
\begin{figure}
    \centering
    \includegraphics[width=0.45\textwidth]{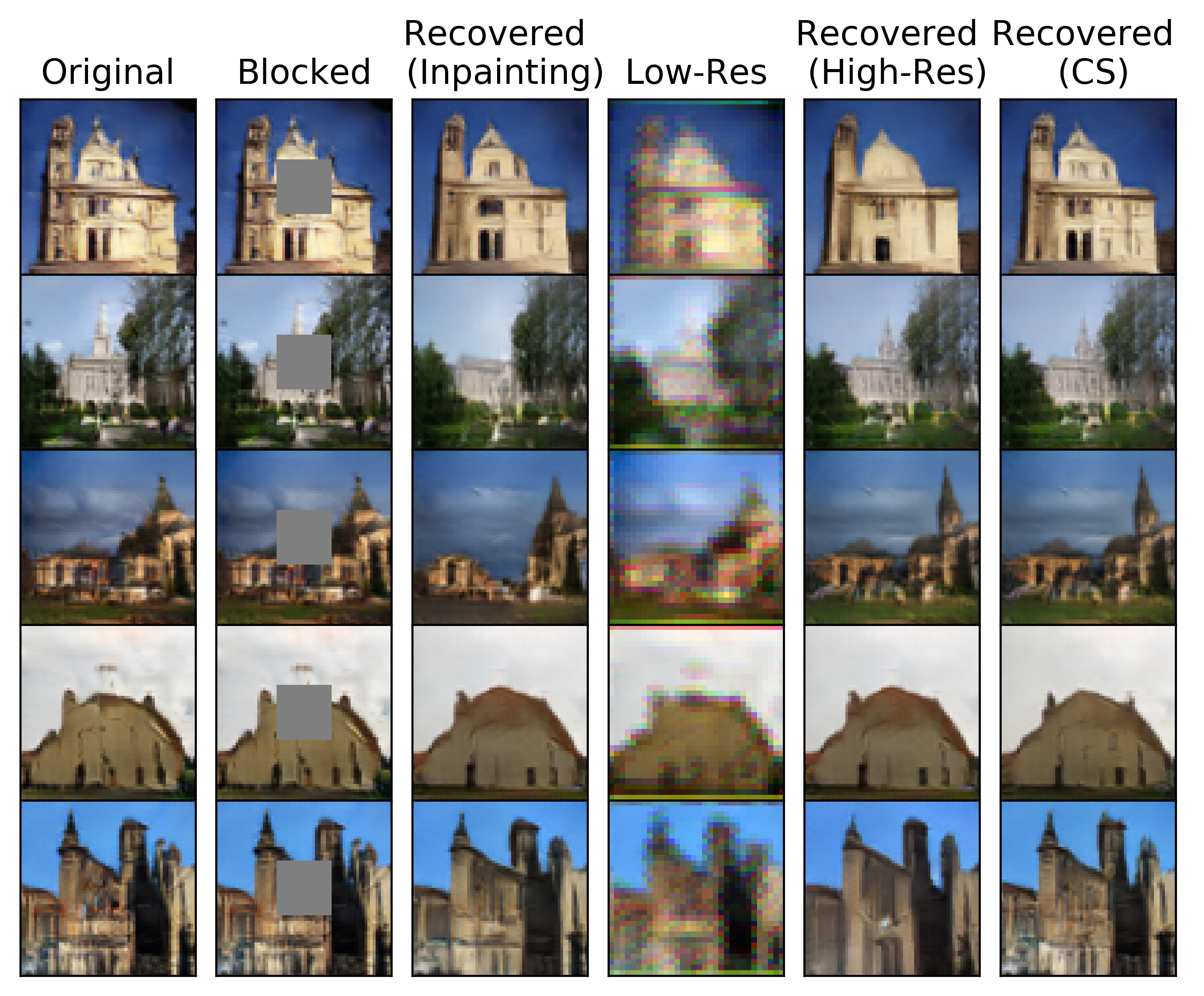}
    \caption{Recovery of LSUN church-outdoor images in inpainting (mask size = $20$), super-resolution ($4\times$) and Compressed Sensing (CS, $m=1000$) tasks.}
    \label{fig:church_result}
\end{figure}
\textbf{MNIST} dataset \cite{lecun1998gradient} consists of $28\times28$ greyscale images of digits with $50,000$ training and $10,000$ test samples. We pre-train the GAN consisting of $4$ transposed convolution layers for $G$ and $4$ convolution layers in the discriminator $D$ using rescaled images lying between $[-1, 1]$. We use $z\sim\mathcal{N}(0, I_{k})$ as the $G$'s input. The GAN is trained using the Adam optimizer with learning rate $0.0001$, mini-batch size of $128$ for $40$ epochs. For training the pseudo-inverse of $G$ \ie $G^{\dagger}$, we minimize the objective (\ref{eq:invGobjective}), using samples generated from $G(z)$, and with the same hyper-parameters used for the GAN.\\
\textbf{CelebA} dataset \cite{liu2015faceattributes} consists of more than $200,000$ celebrity images. We use the aligned and cropped version, which preprocesses each image to a size of $64\times64\times 3$ and scaled between $[-1, 1]$. We randomly pick $160,000$ images for training the GAN. Images from the $40,000$ held-out set are used for evaluation. The GAN consists of $5$ transposed convolution layers in the $G$ and $5$ convolution layers in $D$. GAN is trained for $35$ epochs using Adam optimizer with learning rate $0.00015$ and mini-batch size $128$. $G^{\dagger}$ is trained in the same way as for the MNIST dataset. \\
\textbf{LSUN} church-outdoor dataset \cite{yu2015lsun} consists of more than $126,000$ cropped and aligned images of size $64 \times 64 \times 3$ scaled between $[-1, 1]$. DCGAN generates high-resolution details using spatially local points in lower-resolution feature maps, whereas in SAGAN, details can be generated using information from many feature locations making it a natural choice for diverse dataset such as LSUN. The SAGAN consists of $4$ transposed convolution layers and $2$ self-attention modules at different scales in $G$ and $4$ convolution layers and $2$ self-attention modules in $D$. Each self-attention module consists of 3 convolution layers and are added at the $3rd$ and $4th$ layers of the two networks. SAGAN uses conditional batch normalization in $G$ and projection in $D$. Spectral normalization is used for the layers in both $G$ and $D$. We use ADAM optimizer with $\beta_1 = 0$  and $\beta_2=0.9$, learning rate $0.0001$ and mini-batch size $64$ for the GAN training. $G^\dagger$, consisting of self-attention mechanism similar to $D$, is trained using the objective \ref{eq:invGobjective} using the ADAM optimizer with $\beta_1=0.9$ and $\beta_2=0.999$, learning rate $0.001$ and mini-batch size of $64$ for $100$ epochs.

We compare the performance of our algorithm on MNIST and CelebA with other GAN-prior solvers (\cite{bora2017compressed,shah2018solving}) and sparsity-based methods, Lasso with discrete cosine transform (DCT) basis \cite{tibshirani1996regression} and total variation minimization method (TVAL3) \cite{li2009user} for linear inverse problems, namely compressed sensing (CS), super-resolution and inpainting. For CS, we extensively evaluate the reconstruction performance with the random Gaussian and designed measurement matrices. Furthermore, we demonstrate the recovery of LSUN church-outdoor dataset images using the proposed method for the different problems in Fig. \ref{fig:church_result}.
\subsection{Compressed Sensing}
\subsubsection{Recovery with random Gaussian matrix}
In this set-up, we use the same measurement matrix $A$ as (\cite{bora2017compressed,shah2018solving}) \ie $A_{i, j} \sim N(0, 1/m)$ where $m$ is the number of measurements. For MNIST, the measurement matrix $A \in R^{m\times784}$, with $m = 20, 50, 100 ,200$, whereas for CelebA, $A\in R^{m\times12288}$, with $m=200, 500, 1000, 2000$. Figure \ref{fig:mnist_random} shows the recovery results for MNIST images from the test set. Our NPGD algorithm  performs better than others and avoids local optima. Figure \ref{fig:celebA_random} shows the reconstruction of eight test images from CelebA. Our algorithm outperforms the other three methods visually as it is able to preserve detailed facial features such as sunglasses, hair and has accurate color tones. Figures \ref{fig:recovery error a} and \ref{fig:recovery error c} provide a quantitative comparison for MNIST and CelebA, respectively.
\begin{figure}[tb]
    \centering
    \includegraphics[width=0.45\textwidth]{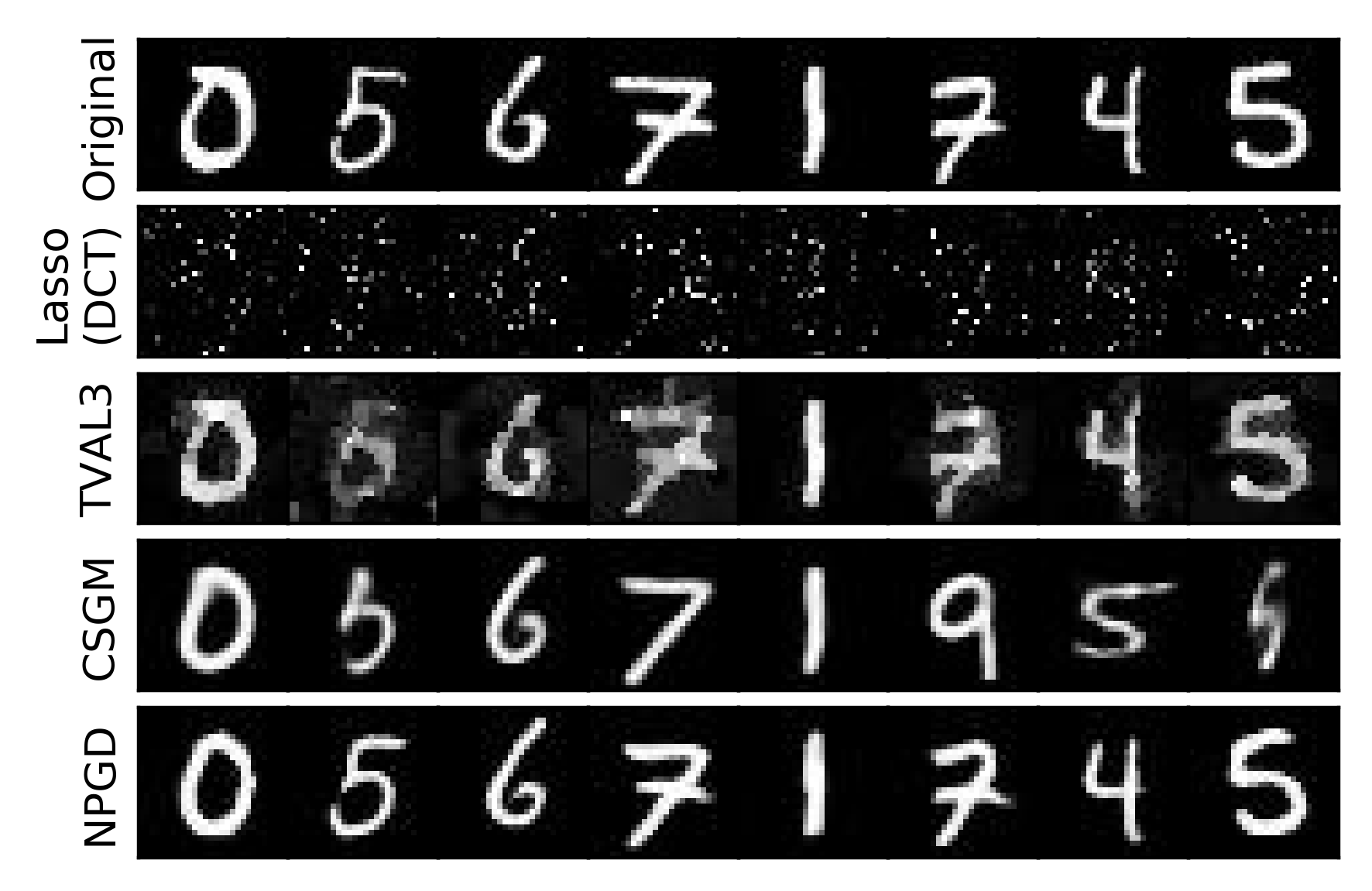}
    \caption[]{Reconstruction using Gaussian matrix with $m=100$. 
    \footnotemark}
    \label{fig:mnist_random}
\end{figure}
\footnotetext{Code of Shah \etal (PGD-GAN) for MNIST not available}

\vspace{-1em}
\subsubsection{Recovery with the designed matrix}
In this set-up, we use the $G$-based designed $A$ described in the section \ref{sec:prop_method-Amatrix}.
We observe that recovery with the designed $A$ is possible for much fewer measurements $m$. This corroborates our assessment based on Figure \ref{fig:MNIST-REC} that the designed matrix satisfies the desired REC condition with high probability for most of the secants, for smaller $m$. Figures \ref{fig:recovery error a}, \ref{fig:recovery error c} show that our algorithm consistently outperforms other approaches in terms of reconstruction error and structural similarity index (SSIM) for a random $A$. Furthermore, with the designed $A$, we are able to get performance on-par with the random matrix using $5\text{-}10\times$ smaller $m$. Figures \ref{fig:recovery error b},\ref{fig:recovery error d} show the recovered images with the designed and a random $A$ using our algorithm for different $m$. Clearly, recovery with the random $A$ requires much bigger $m$ than the designed one to achieve similar performance.
\begin{figure}[tb]
    \centering
    \includegraphics[width=0.42\textwidth]{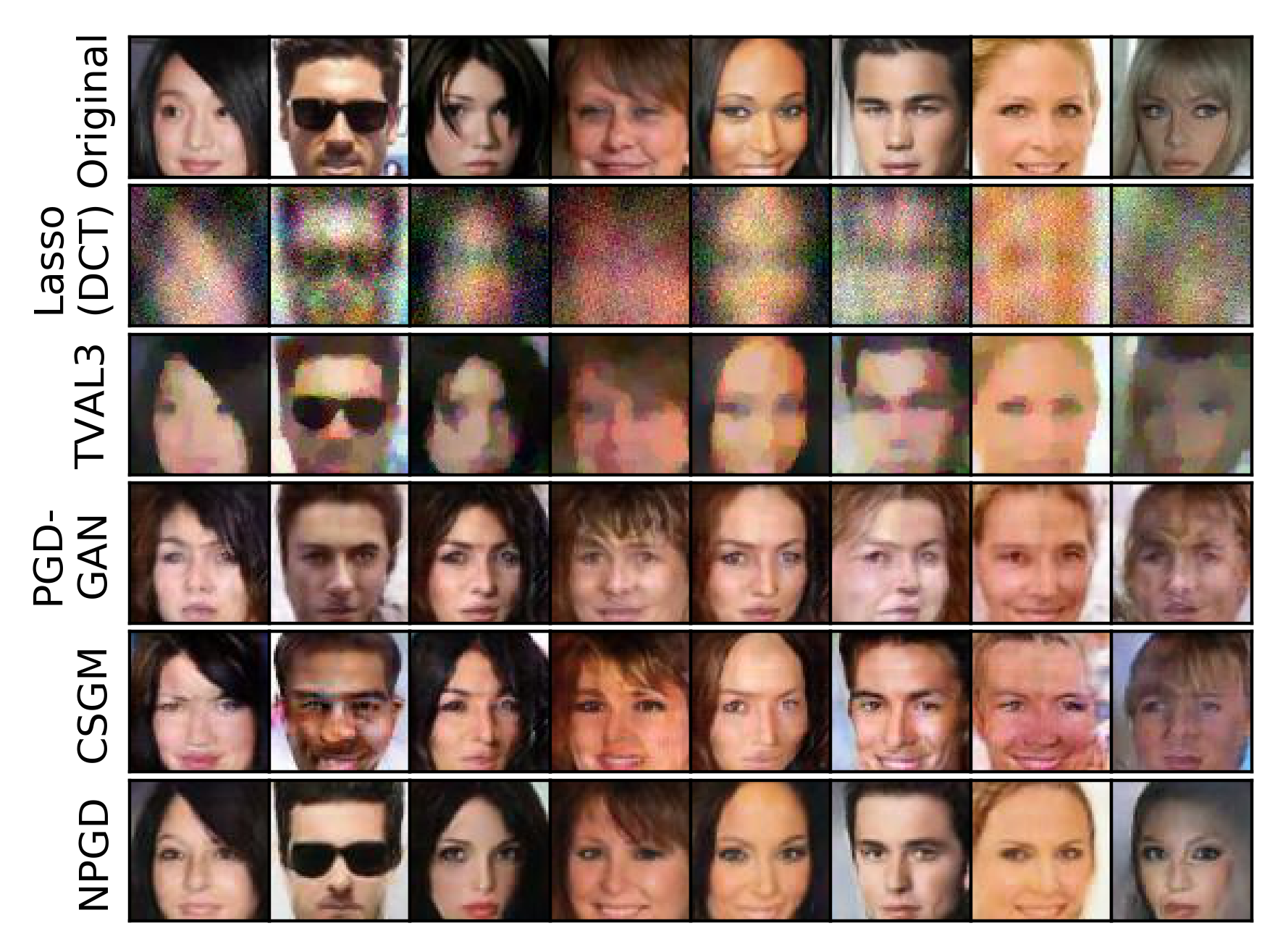}
    \caption{Reconstruction using Gaussian matrix with $m=1000$.}
    \label{fig:celebA_random}
\end{figure}
\begin{figure*} [tb]
    \vspace{-1em}
    \centering
    \begin{subfigure}[b]{.63\linewidth}
    \centering
    \includegraphics[width=.99\textwidth]{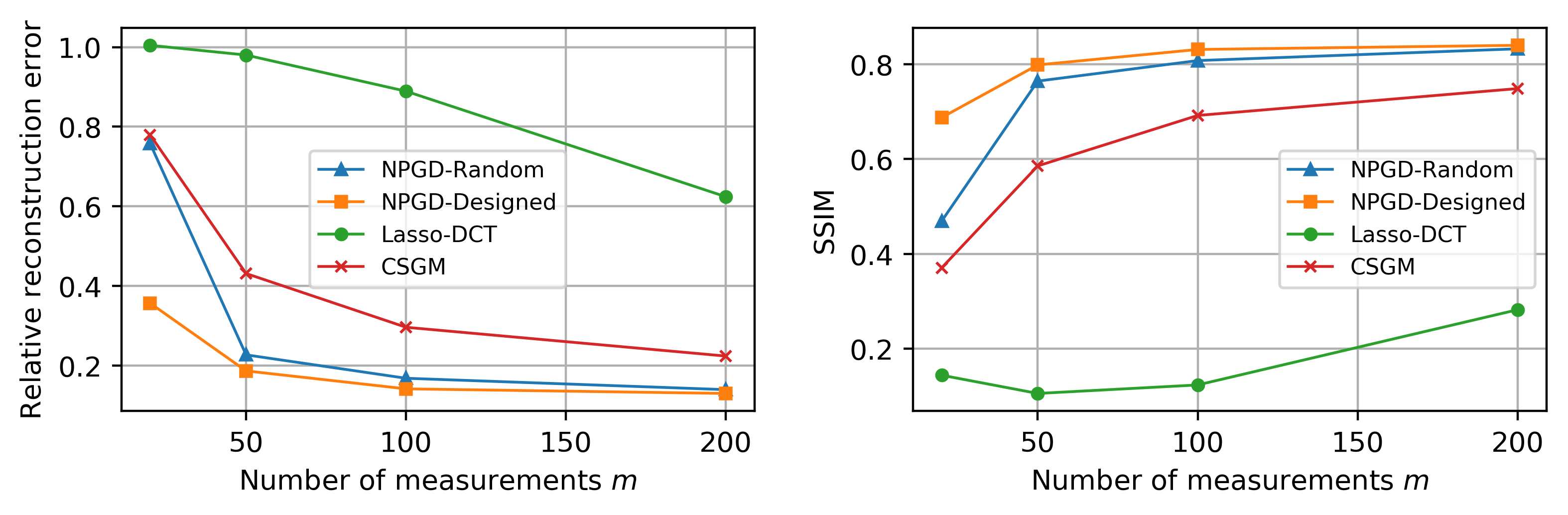}
    \caption{}\label{fig:recovery error a}
    \end{subfigure}
    \begin{subfigure}[b]{.25\linewidth}
    \centering
    \includegraphics[width=.99\textwidth]{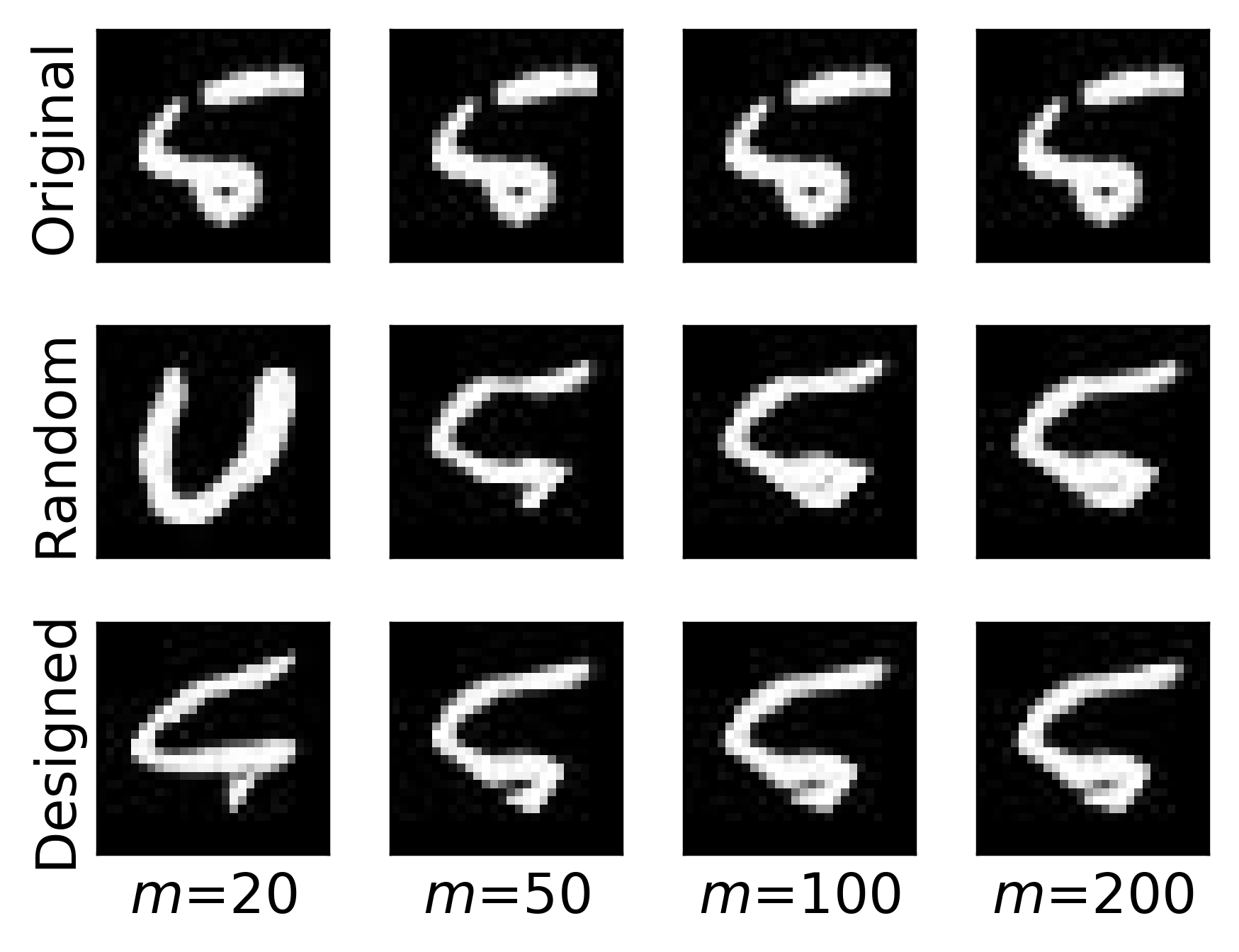}
    \caption{}\label{fig:recovery error b}
    \end{subfigure}\\    
    \begin{subfigure}[b]{.63\linewidth}
    \centering
    \includegraphics[width=.99\textwidth]{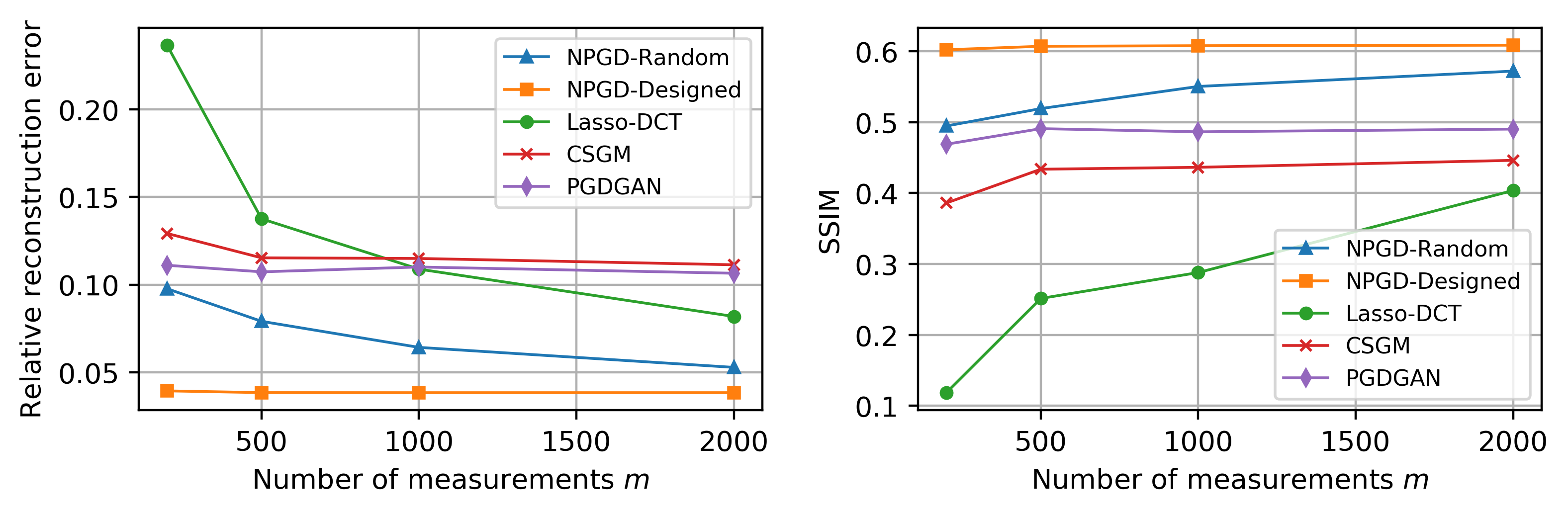}
    \caption{}\label{fig:recovery error c}
    \end{subfigure}
    \begin{subfigure}[b]{.25\linewidth}
    \centering
    \includegraphics[width=.99\textwidth]{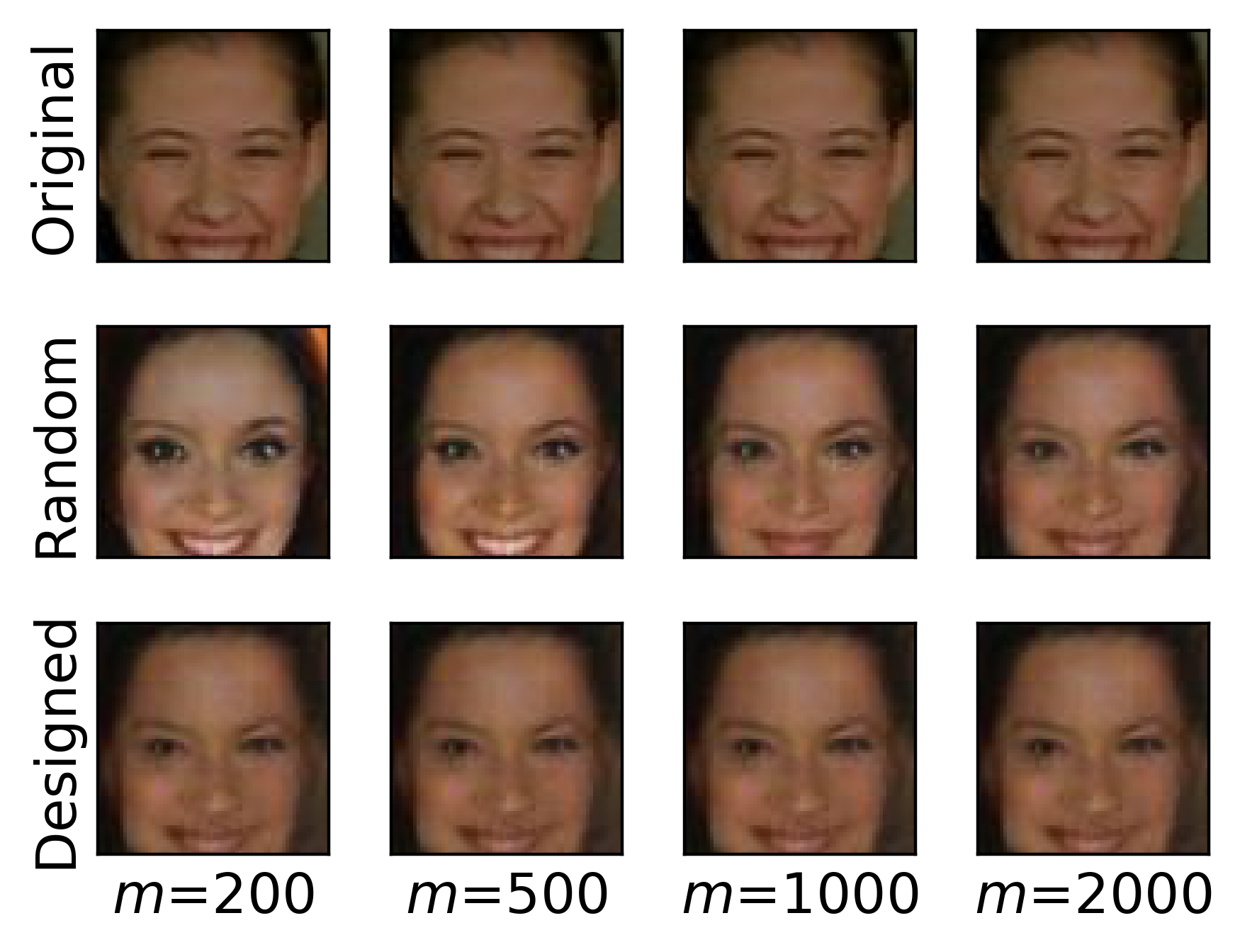}
    \caption{}\label{fig:recovery error d}
    \end{subfigure}\\  
    \caption{(a) Relative error $\|x^*-\hat{x}\|^2/\|x^*\|^2$ and SSIM of  reconstruction algorithms for MNIST dataset with $m = 20, 50, 100, 200$ measurements. (b) MNIST reconstructions  with a random Gaussian (middle row) and the designed matrix with orthonormal rows based on $G$ (bottom row) using different $m$. (c) Relative error and SSIM for CelebA dataset with  $m=200, 500, 1000, 2000$ measurements. (d) CelebA reconstructions, as in (b).
    }
    \label{fig:recovery error}
\end{figure*}
\subsection{Super-resolution}
Super-resolution refers to recovering the high-resolution image from a single low-resolution image, often modeled as a blurred and downsampled image of the original. This super-resolution problem is just a special case in our framework of linear measurements.
We simulate the blurring+downsampling by taking the spatial averages of $f\times f$ pixel values (in RGB color space), where $f$ is the ratio of downsampling. This corresponds to blurring by an $f\times f$ box impulse response, followed by downsampling.  We test our algorithm with $f=2,3,4$, corresponding to $4\times$, $9\times$ and $16\times$-smaller image sizes, respectively. We note that for higher $f$,  the measurement matrix $A$ may not satisfy the desired $REC(S, \alpha, \beta)$ with $\frac{\beta}{\alpha} < 2$ (see figure \ref{fig:MNIST-REC}) required for convergence of our algorithm and, consequently, our theorem might not be applicable. Results for MNIST in figure \ref{fig:mnist_sr_a}-\ref{fig:mnist_sr_c} shows that recovery performance indeed degrades with increasing $f$, however, our NPGD algorithm, gives better reconstructions than Bora \etal \cite{bora2017compressed}.
\begin{figure}[tb]
    \centering
    \begin{subfigure}[b]{.323\linewidth}
    \centering
    \includegraphics[width=.99\textwidth]{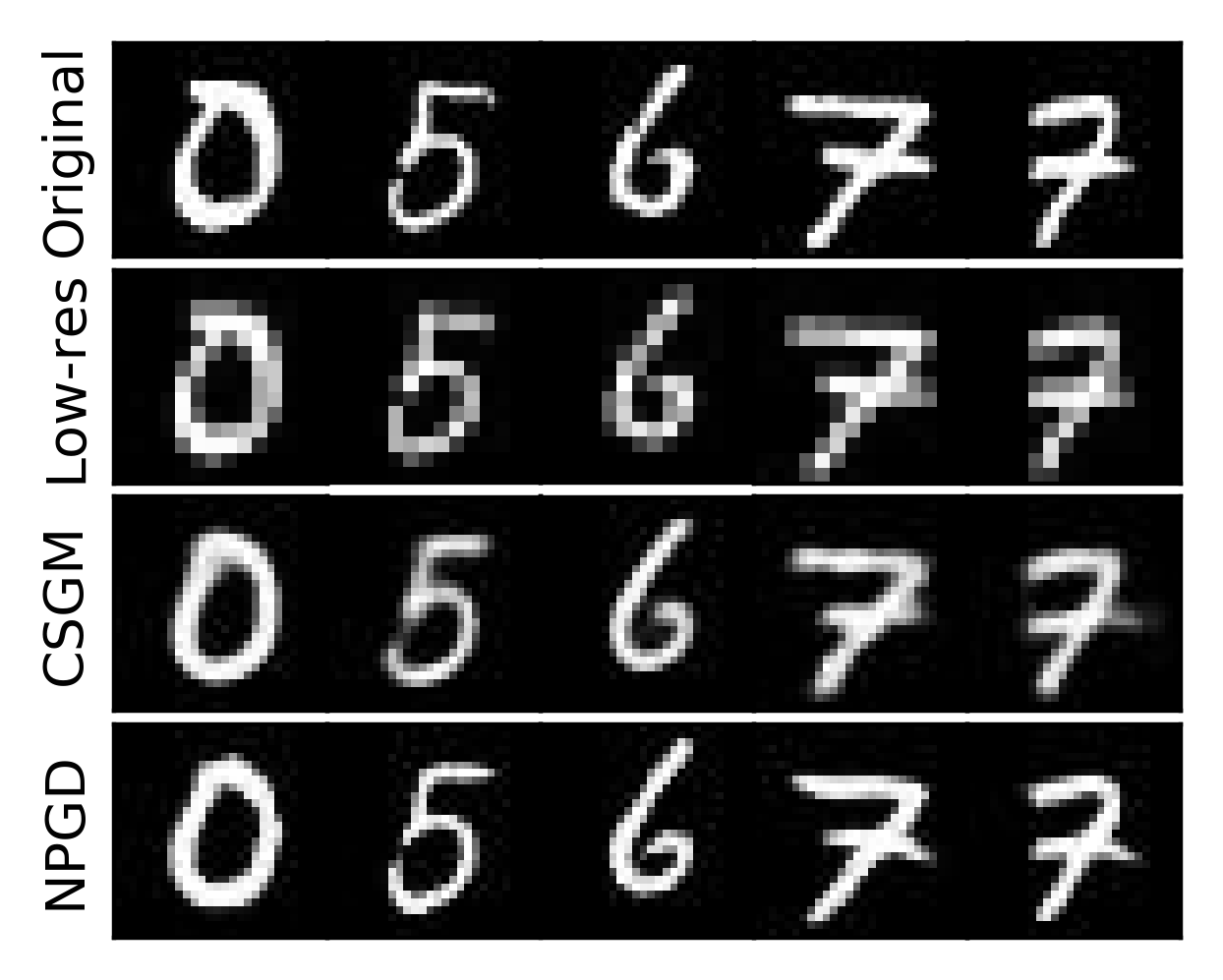}
    \caption{$4\times$ low-res}\label{fig:mnist_sr_a}
    \end{subfigure}
    \begin{subfigure}[b]{.323\linewidth}
    \centering
    \includegraphics[width=.99\textwidth]{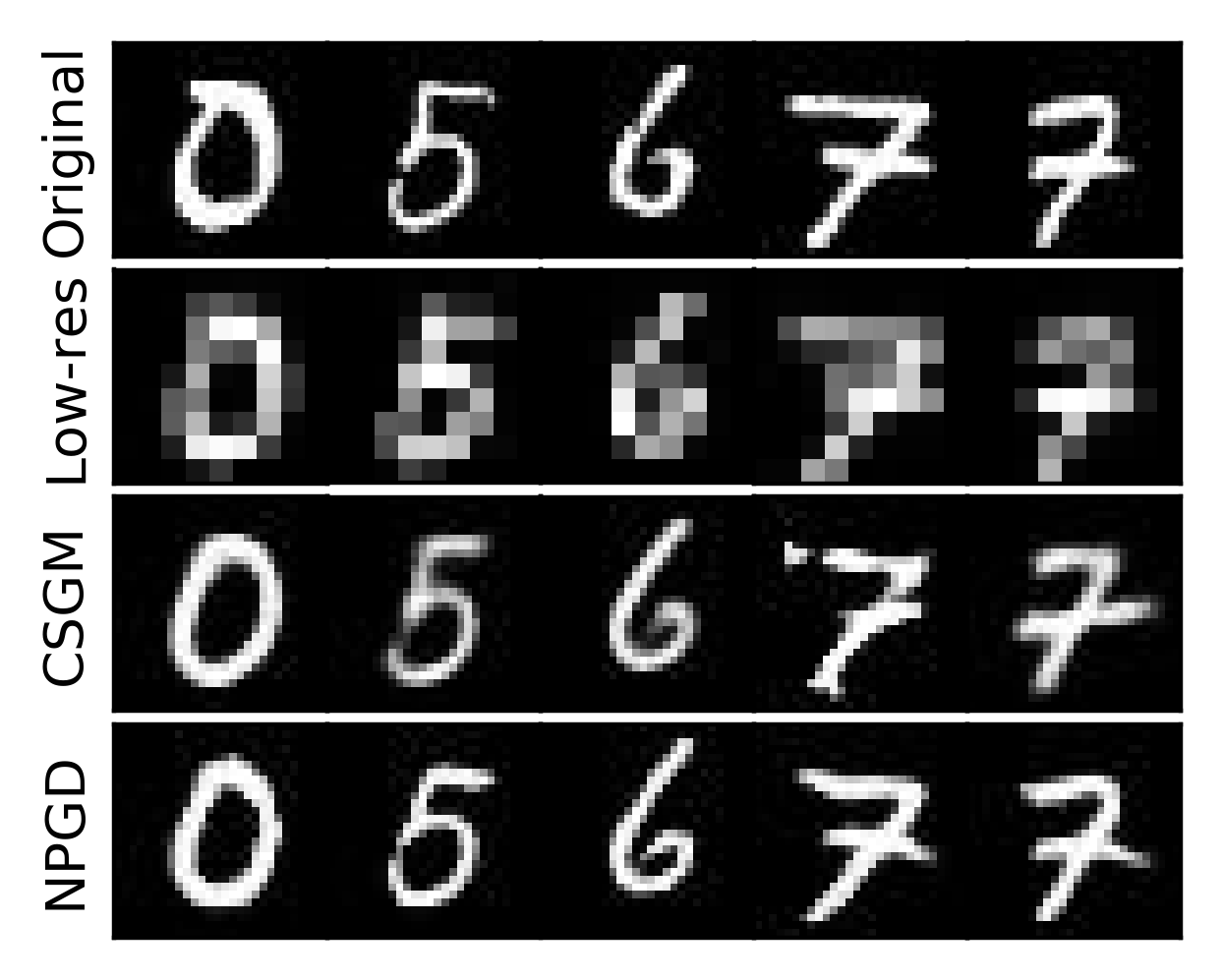}
    \caption{$9\times$ low-res}\label{fig:mnist_sr_b}
    \end{subfigure}
    \begin{subfigure}[b]{.323\linewidth}
    \centering
    \includegraphics[width=.99\textwidth]{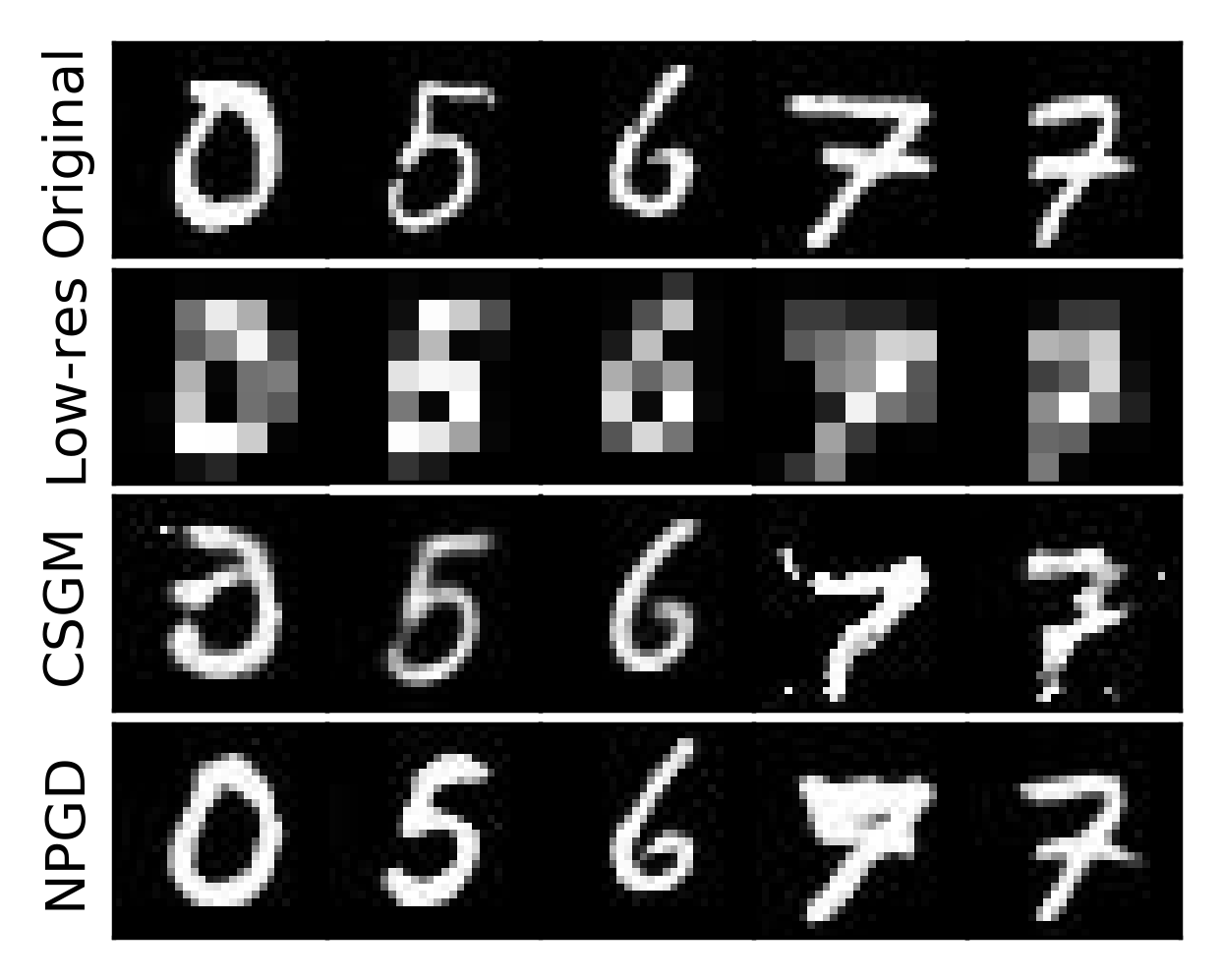}
    \caption{$16\times$ low-res}\label{fig:mnist_sr_c}
    \end{subfigure}
    \caption{Super-resolution on MNIST dataset. Row 1: original image $x$. Row 2: low-resolution images $y$, upsampled using constant padding, Row 3: high resolution image recovered by \cite{bora2017compressed}. Row 4: high-resolution image recovered by our method.}
    \label{fig:mnist_super_resolution_inpainting}
\end{figure}
\vspace{-1em}
\begin{figure}[tb]
    \centering
    \begin{subfigure}[b]{.323\linewidth}
    \centering
    \includegraphics[width=.99\textwidth]{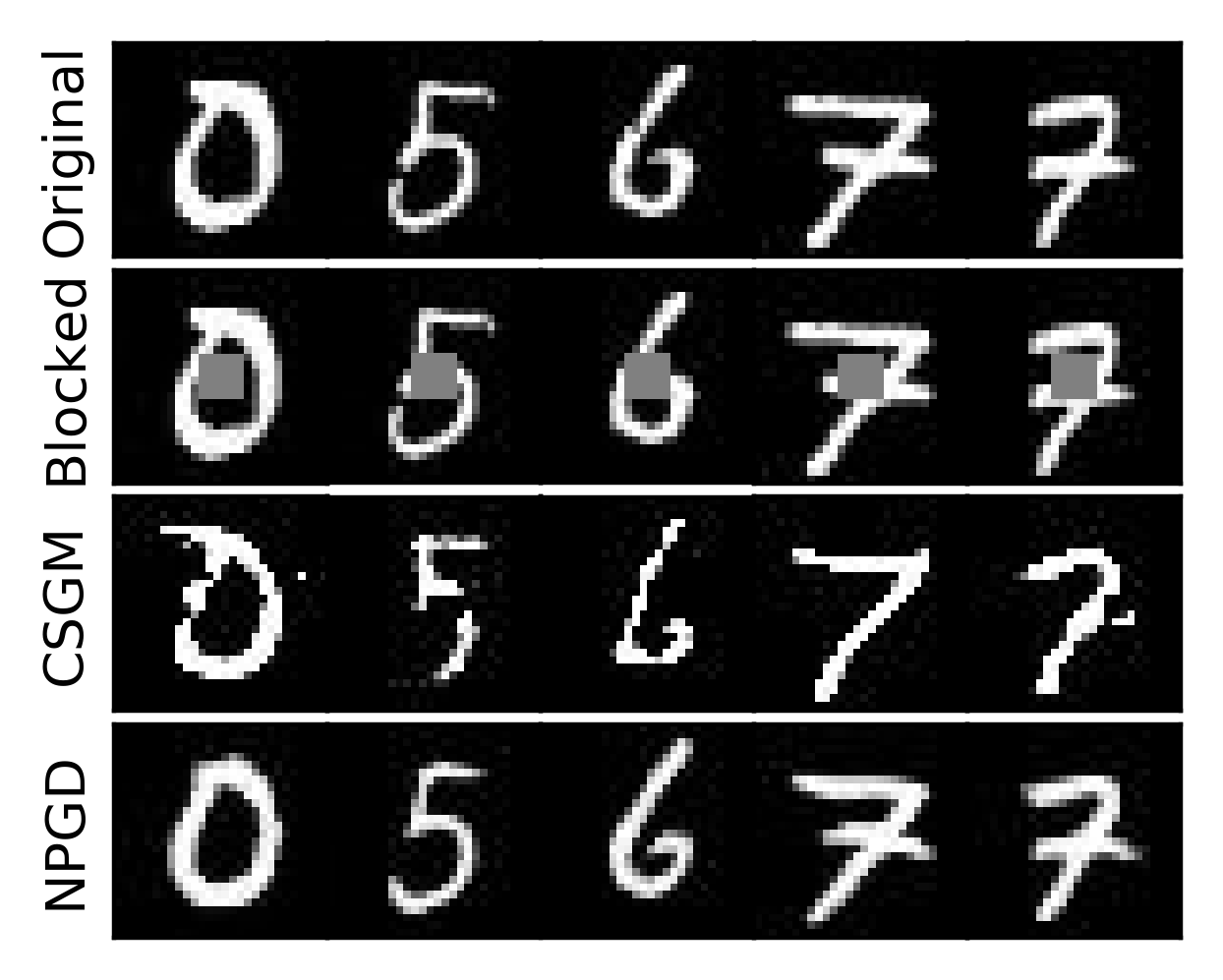}
    \caption{Mask size = 6}\label{fig:mnist_inpaint_d}
    \end{subfigure}
    \begin{subfigure}[b]{.323\linewidth}
    \centering
    \includegraphics[width=.99\textwidth]{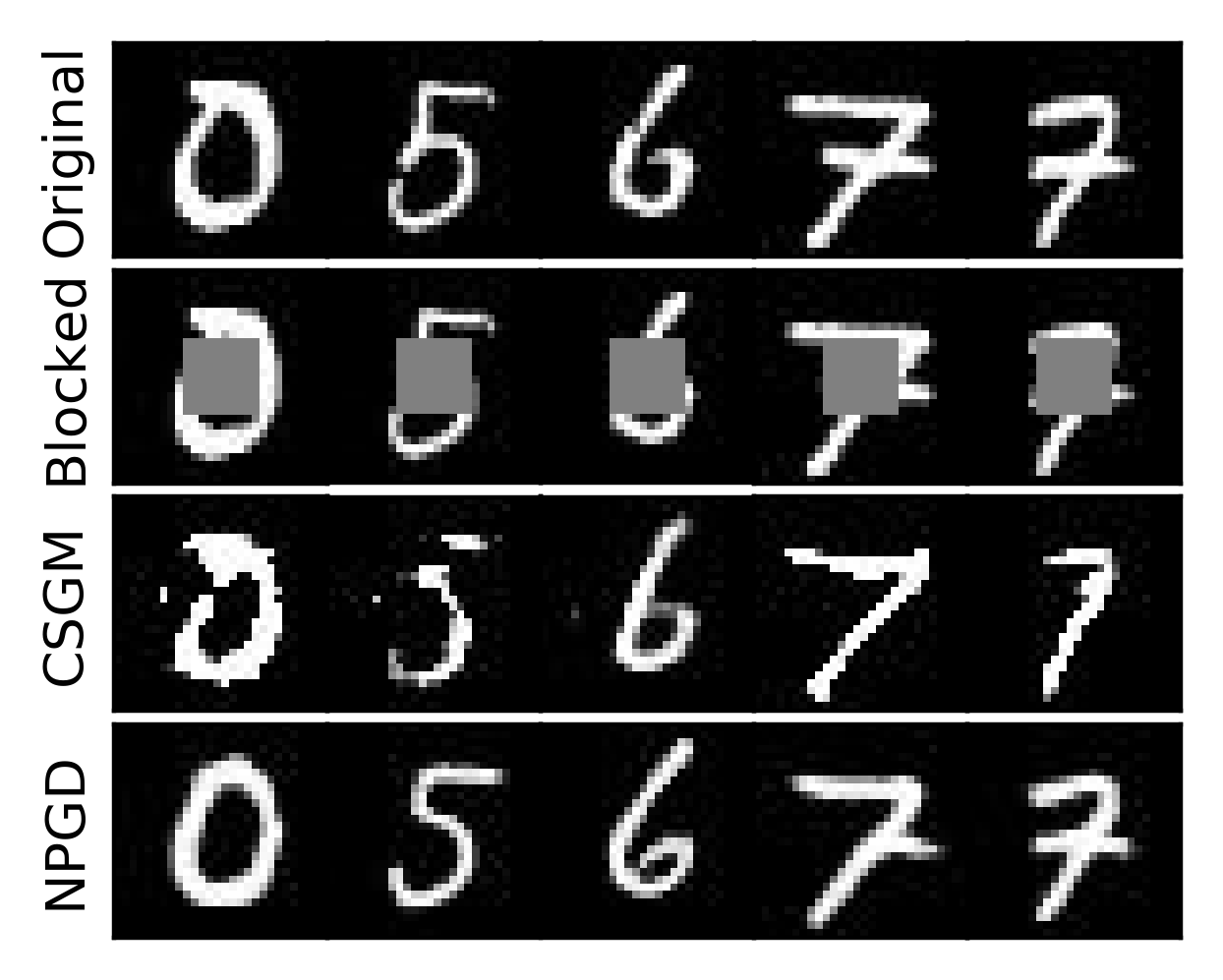}
    \caption{Mask size = 10}\label{fig:mnist_inpaint_e}
    \end{subfigure}
    \begin{subfigure}[b]{.323\linewidth}
    \centering
    \includegraphics[width=.99\textwidth]{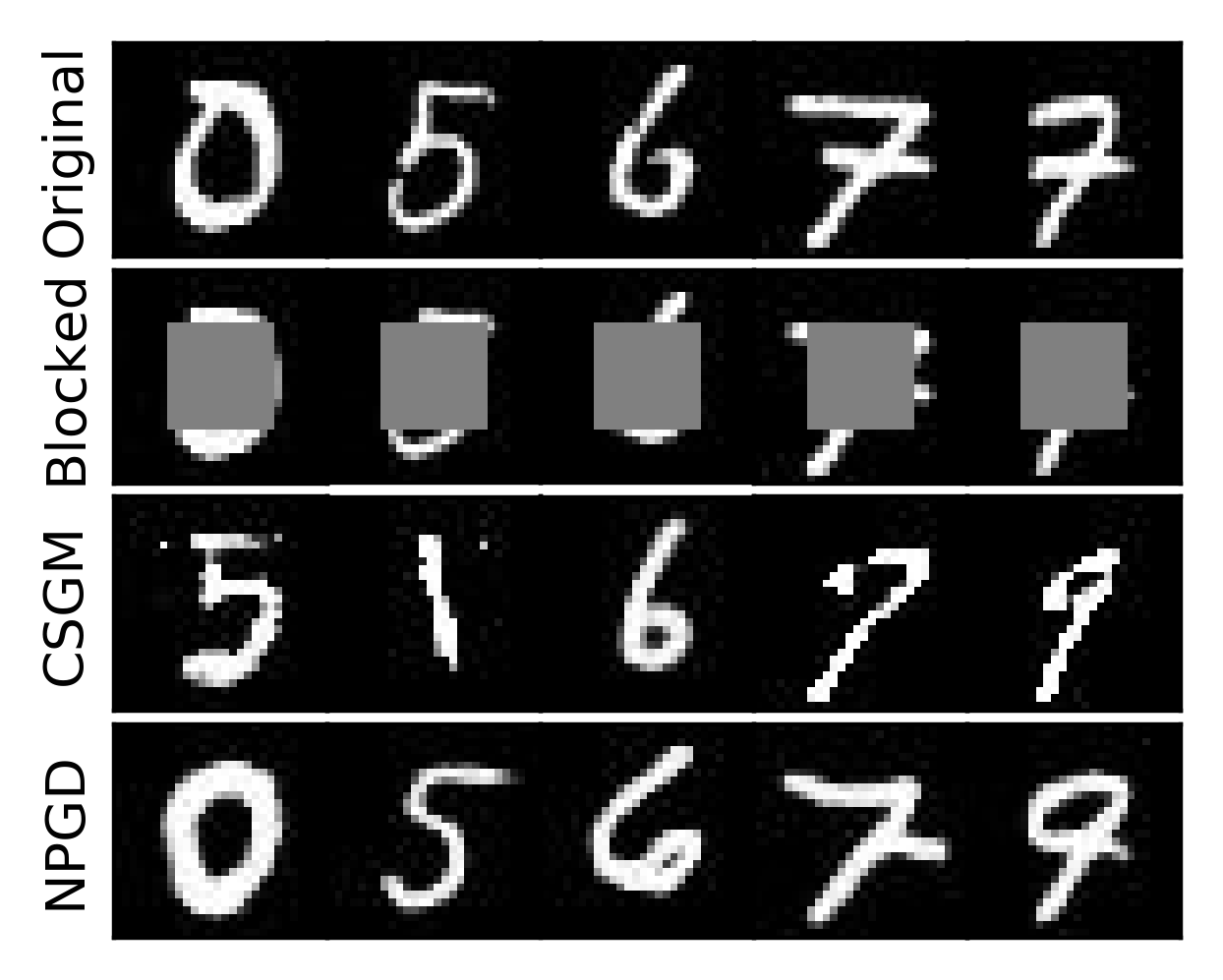}
    \caption{Mask size = 14}\label{fig:mnist_inpaint_f}
    \end{subfigure}
    \caption{Inpainting in MNIST dataset. Row 1: original image $x$. Row 2: image $y$ with center block missing. Row 3: image recovered by \cite{bora2017compressed}. Row 4: image recovered by our method.}
\end{figure}
\subsection{Inpainting}
Inpainting refers to recovering the entire image from a partly occluded version. In this case, $y$ is an image with masked
regions and $A$ is the linear operation applying a pixel-wise mask to the original image $x$. Again, this is a special case of linear measurements where each measurement corresponds to an observed pixel. For experiments on the MNIST dataset, we apply a centered square mask of size $6, 10, 14$. Recovery results in figure \ref{fig:mnist_inpaint_d}-\ref{fig:mnist_inpaint_f} show that our method  consistently outperforms \cite{bora2017compressed} and recovers almost perfectly for mask-size less than $10$. The results align with the $REC$ histogram for inpainting (figure \ref{fig:MNIST-REC}), which shows that for higher mask-size, the desired $REC$ condition for guaranteed convergence may not be satisfied.
\subsection{Comparison of Run-time for Recovery}
Table~\ref{tab:comp_exec_time_rec} compares the run times  of our network-based algorithm NPGD and other recovery algorithms. We record the average run time to recover a single image from its compressed sensing measurements over 10 different images. All three algorithms were run on the same workstation with i7-4770K CPU, 32GB RAM and GeForce Titan X GPU. 
\begin{table}[t]
    \centering
    \begin{tabular}{||c|c|c|c||}
        \hline
         $m$ & CSGM \footnotemark   & PGD-GAN & NPGD   \\
         \hline
         200 & 5.8  & 66 & 0.09 (64x) \\
         500 & 6.6 & 60 & 0.10 (66x)  \\
         1000 & 8.0 & 63 & 0.11 (72x)  \\
         2000 & 11.2 & 61 & 0.14 (80x) \\
         \hline
    \end{tabular}
    \caption{Comparison of execution time ([sec.]) of recovery algorithms on the CelebA dataset. The relative speedup of our NPGD over the CSGM algorithm of Bora \etal is shown in parenthesis.}
    \label{tab:comp_exec_time_rec}
\end{table}
\footnotetext{Run time includes 2 initializations, as implemented by the authors, for CelebA. The same number of initializations for CelebA (and 10 for MNIST) has been used to produce results in figures \ref{fig:mnist_random}, \ref{fig:celebA_random}, \ref{fig:recovery error}, and \ref{fig:mnist_super_resolution_inpainting}. Our NPGD algorithm uses only one, deterministic initialization, $x_0 = A^Ty$.}

\subsection{Analysis: Error in Projector}
Figure \ref{fig:mnist_projector} illustrates the idempotence error of the projector for different $k$. Three different categories of images are tested, namely, MNIST training samples, MNIST test samples, and samples $G(z)$ generated using the pre-trained $G$. We use clean images from the three sources and plot the relative idempotence error $\|x-P_G(x)\|^2/\|x\|^2$. The error decreases with increasing $k$ and saturates around $k=100$. The idemopotence errors for MNIST training and test samples are very close, indicating negligible generalization error. On the other hand, samples generated by $G(z)$ give much lower errors, which indicates representation error in the GAN.  Thus we expect that a more flexible generator (deeper network) will lead to a better projector on the actual dataset and hence improve performance. 
\begin{figure}[tb]
    \centering
    \includegraphics[width=0.32\textwidth]{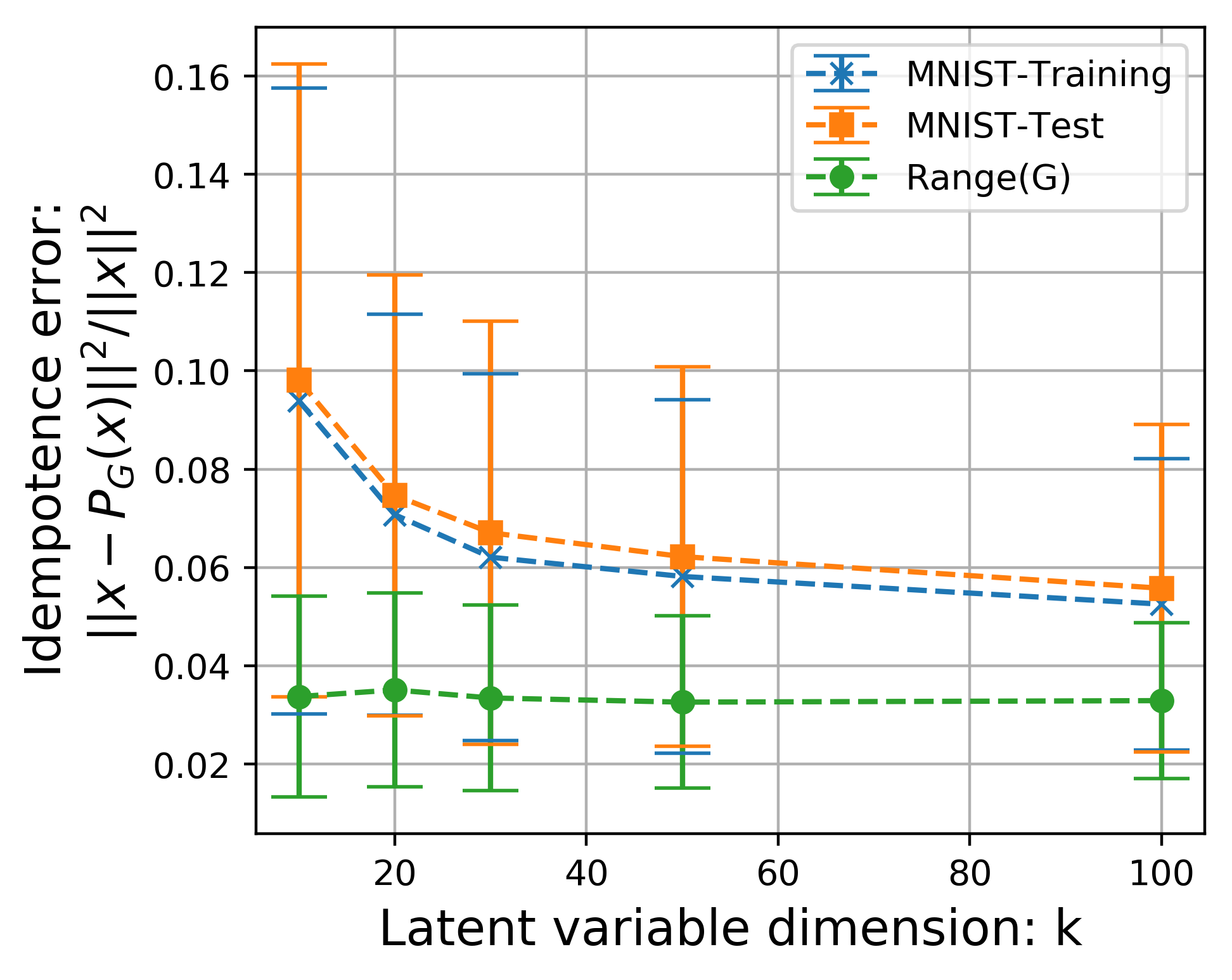}
    \caption{Idempotence Error}
    \label{fig:mnist_projector}
\end{figure}
\section{Conclusion}
In this work, we propose a GAN based projection network for faster recovery in linear inverse problems. Our method demonstrates superior performance and also provides a speed-up of $60\text{-}80\times$ over existing GAN-based methods, eliminating the expensive computation of the Jacobian matrix every iteration. We provide a theoretical bound on the reconstruction error for a moderately-conditioned measurement matrix. To help design such a matrix for compressed sensing, we propose a method which enables recovery using $5\text{-}10\times$ fewer measurements than using a random Gaussian matrix. 
Our experiments on compressed sensing, super-resolution, and inpainting demonstrate that generic linear inverse problems can be solved with the proposed method without requiring retraining. In the future, deriving a bound for the projection error $\delta$ and an associated performance guarantee is a interesting direction.   

\appendix
\setlength{\belowdisplayskip}{2pt} \setlength{\belowdisplayshortskip}{1.5pt}
\setlength{\abovedisplayskip}{2pt} \setlength{\abovedisplayshortskip}{1.5pt}
\section{Appendix: Proof of Theorem 1} 
By the assumption of $\delta$-approximate projection, 
\begin{equation}\label{eq:1.1}
    \begin{aligned}
    \|w_t-x_{t+1}\|^2 = \|w_t-G(G^\dagger(w_t))\|^2 \leq \|x^*-w_t\|^2+\delta
    \end{aligned}
\end{equation}
where from the gradient update step, we have
\begin{equation*} 
    w_t = x_t-\eta A^T(Ax_t-y) = x_t-\eta A^TA(x_t-x^*) \ 
\end{equation*}
Substituting $w_t$ into \eqref{eq:1.1} yields
\begin{equation*} 
    \begin{array} { l } { \left\| x _ { t + 1 } - x _ { t } \right\|^ { 2 } - 2 \eta \left\langle x _ { t + 1 } - x _ { t } , A ^ { T } A\left( x^* -  x _ { t } \right) \right\rangle } \\ { \leq \left\| x ^ { * } - x _ { t } \right\| ^ { 2 } - 2 \eta \|A(x^*-x_t)\|^2+\delta} \end{array}
\end{equation*}
Rearranging the terms we have
\begin{equation}\label{eq:1.4}
    \begin{aligned}
    &2 \left\langle x _ { t } - x _ { t + 1 } , A ^ { T } A\left(x ^ { * } - x _ { t } \right) \right\rangle\\
    &\leq \frac { 1 } { \eta } \left\| x ^ { * } - x _ { t } \right\| ^ { 2 } - 2 f \left( x _ { t } \right)- \frac { 1 } { \eta } \left\| x _ { t + 1 } - x _ { t } \right\| ^ { 2 } +\frac{\delta}{\eta}\\
    &\leq \Big(\frac { 1 } { \eta\alpha }-2\Big) f(x_t)- \frac { 1 } { \eta } \left\| x _ { t + 1 } - x _ { t } \right\| ^ { 2 } + \frac { \delta } { \eta }\\
    &\leq \Big( \frac { 1 } { \eta \alpha } - 2 \Big) f \left( x _ { t } \right) - \frac { 1 } { \eta\beta } \left\| Ax _ { t + 1 } - Ax _ { t } \right\| ^ { 2 } + \frac { \delta } { \eta }
    \end{aligned}
\end{equation}
where the last two inequalities follow from $REC(S,\alpha,\beta)$.
Now the LHS can be rewritten as:
    \begin{align}
        &2 \left\langle x _ { t } - x _ { t + 1 } , A ^ { T } A\left( x^* - x _ { t } \right) \right\rangle \nonumber \\
        &= \|Ax ^*-Ax_{t+1}\|^2- \|Ax^*-Ax_{t}\|^2-\|Ax_{t+1}-Ax_{t}\|^2 \nonumber \\
        &=f(x_{t+1})-f(x_t) -\|Ax_{t+1}-Ax_{t}\|^2 \label{eq:1.5} 
    \end{align}
Combining \eqref{eq:1.4} and \eqref{eq:1.5}, and rearranging the terms, we have:
\begin{equation*}
\small
    f(x_{t+1}) \leq \Big(\frac{1}{\eta\alpha}-1\Big) f(x_t)+\Big(1-\frac{1}{\eta\beta}\Big)\left\| A x _ { t + 1 } - A x _ { t } \right\| _ { 2 } ^ { 2 } + \frac { \delta } { \eta }
\end{equation*}
and since $\eta=1/\beta$,
\begin{equation*}
    f(x_{t+1})\leq \Big(\frac{\beta}{\alpha}-1\Big) f(x_t) +\beta\delta
\end{equation*}
For simplicity, we substitute $\kappa=\beta/\alpha $ in the following:
\begin{align*}
    f(x_n)&\leq \left( \kappa - 1 \right) ^ { n } f \left( x _ { 0 } \right) +\beta\delta \sum_{k=0}^{n-1}\left( \kappa - 1 \right) ^ {k}\\
    &=\left( \kappa - 1 \right) ^ { n } f \left( x _ { 0 } \right) + \frac{\beta\left(1-(\kappa -1)^n\right)}{2-\kappa}\delta
\end{align*}
For convergence, we require $1 \leq \kappa= \beta/\alpha < 2$. When $n$ reaches $ \frac{1}{2-\kappa}\log\Big(\frac{f(x_0)}{C\alpha\delta}\Big)$, we have 
    \begin{align*}
    &\|x_n-x^*\|^2\leq \frac{\|Ax_n-Ax^*\|^2}{\alpha} = \frac{f(x_n)}{\alpha}\\ 
    &\leq \left(\kappa-1\right)^n \frac{f(x_0)}{\alpha} + \frac{\beta\left(1-(\kappa -1)^n\right)}{\alpha(2-\kappa)}\delta\\
    &\leq \left(\kappa-1\right)^n \frac{f(x_0)}{\alpha} + \frac{\delta}{2/\kappa-1}\leq \Big(C+\frac{1}{2/\kappa-1}\Big)\delta
    \end{align*}
Finally, when $n\rightarrow\infty$, we have $\left(\kappa-1\right)^n \frac{f(x_0)}{\alpha}\rightarrow 0$
\begin{equation*}
    \|x^*-x_\infty\|^2 \leq \frac { \delta } { 2/\kappa - 1 } = \frac { \delta } { 2 \alpha/\beta - 1 }
\end{equation*}

\newpage
{\small
\bibliographystyle{ieee_fullname}
\bibliography{inverseGAN}
}

\end{document}